\newcommand{\MG}{\textsc{MIX+GAN}\xspace}
\newcommand{\DCGAN}{\textsc{DCGAN}\xspace}
\newcommand{\MDC}{\textsc{MIX+DCGAN}\xspace}
\newcommand{\WGAN}{\textsc{WassersteinGAN}\xspace}
\newcommand{\MW}{\textsc{MIX+WassersteinGAN}\xspace}
\def\shownotes{0}  
\newcommand{\authnote}[2]{$\ll$\textsf{\footnotesize #1 notes: #2}$\gg$}
\newcommand{\authnote}[2]{}
\newtheorem{claim}{Claim}
\newtheorem{corollary}{Corollary}[section]
\newtheorem{lemma}{Lemma}
\newtheorem{theorem}{Theorem}[section]
\theoremstyle{definition}
\newtheorem{definition}{Definition}
\newtheorem{example}{Example}
 \newcommand{\IGNORE}[1]{}
\DeclareMathOperator*\E{\mathbb{E}}
\newcommand\R{\mathbb{R}}
\renewcommand\S{\mathcal{S}}
\newcommand\N{\mathcal{N}}
\newcommand\D{\mathcal{D}}
\newcommand\U{\mathcal{U}}
\newcommand\V{\mathcal{V}}
\newcommand\X{\mathcal{X}}
\newcommand\eps{\varepsilon}
\newcommand{\Dreal}{\cD_{real}}
\newcommand{\hatDreal}{\hat{\cD}_{real}}
\newcommand{\cDr}{\mathcal D_{real}}
\newcommand{\cD}{\mathcal D}
\title{Generalization and  Equilibrium in Generative Adversarial Nets (GANs)}
\date{}
\author{Sanjeev Arora\thanks{Princeton University, Computer Science Department, email: arora@cs.princeton.edu} \and Rong Ge \thanks{Duke University, Computer Science Department, email: rongge@cs.duke.edu}\and Yingyu Liang\thanks{Princeton University, Computer Science Department, email: yingyul@cs.princeton.edu}\and Tengyu Ma\thanks{Princeton University, Computer Science Department, email: tengyu@cs.princeton.edu} \and Yi Zhang\thanks{Princeton University, Computer Science Department, email: yz7@cs.princeton.edu}}
\begin{document}
\maketitle
\begin{abstract} 

We show that  training of generative adversarial network (GAN) may not have good {\em generalization} properties;
e.g., training may appear successful but  the trained distribution may be  far from  target distribution in standard metrics. However, generalization does occur for a weaker metric called {\em neural net distance}. It is also shown that an approximate pure equilibrium exists\footnote{This is an updated version of an ICML'17 paper with the same title. The main difference is that in the ICML'17 version the pure equilibrium result was only proved for Wasserstein GAN. In the current version the result applies to most reasonable training objectives. In particular, Theorem~\ref{thm:pureequilibrium} now applies to both original GAN and Wasserstein GAN.} in the discriminator/generator game for a special class of generators with natural training objectives when generator capacity and training set sizes are moderate.

This existence of equilibrium inspires {\sc mix+gan} protocol, which can be combined with any existing GAN training, and empirically shown to improve some of them.
\end{abstract}
\section{Introduction}

Generative Adversarial Networks (GANs)~\citep{goodfellow2014generative} have become one of the dominant methods for fitting generative models to complicated real-life data, and even found unusual uses such as designing good cryptographic primitives~\citep{abadi2016learning}. See a survey by \cite{goodfellow2016nips}.
Various novel architectures and training objectives were introduced to address perceived shortcomings of the original idea, leading to more stable training and more realistic generative models in practice (see ~\cite{odena2016conditional, xh2016SGAN,soumith2016DCGAN,tolstikhin2017adagan,Salimans2016ImprovedGANs,2016arXiv161204021J,2016arXiv161101673D} and the reference therein). 

The goal is to train a {\em generator} deep net whose input is a standard Gaussian, and whose output is a sample from some distribution $\cD$ on $\R^d$, which has to be close to some target distribution $\cDr$ (which could be, say, real-life images represented using raw pixels). The training uses samples from $\cDr$ and 
together with the generator net also trains a {\em discriminator} deep net trying to maximise its ability to distinguish between samples from $\cDr$ and $\cD$. So long as the discriminator is successful at this task with nonzero probability, its success can be used to generate a feedback (using backpropagation) to the generator, thus improving its distribution $\cD$. Training is continued until the generator {\em wins}, meaning that the discriminator 
can do no better than random guessing when deciding whether or not a particular sample came from $\cD$ or $\cDr$.
This basic iterative framework has been tried with many training objectives; see Section~\ref{sec:prelim}. 
\begin{figure}[!t]
	\centering
	\includegraphics[height=1.3in, width=2in]{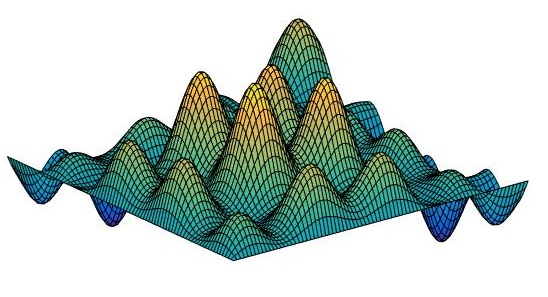}
	\caption{Probability density $\cDr$ with many peaks and valleys}
	\label{fig:peaks}
\end{figure}
But it has been unclear what to conclude when the generator wins this game: is $\cD$ close to $\cDr$ in some metric? 
One seems to need some extension of {\em generalization theory} that would imply such a  conclusion.
The hurdle  is that distribution $\cDr$ could be complicated and may have many peaks and valleys; see Figure~\ref{fig:peaks}. The number of peaks (modes) may even be exponential in $d$. (Recall the {\em curse of dimensionality}: in $d$ dimensions there are $\exp(d)$ directions whose pairwise angle exceeds say $\pi/3$, and each could be the site of a peak.) Whereas the number of samples from $\cDr$ (and from $\cD$ for that matter) used in the  training  is a lot fewer, and thus may not reflect most of the peaks and valleys of $\cDr$. 

A standard analysis due to~\citep{goodfellow2014generative} shows that when the discriminator capacity (= number of parameters) and number of samples is \textquotedblleft large enough\textquotedblright, then a win by the generator  implies that $\cD$ is very close to $\cDr$  (see Section~\ref{sec:prelim}). But the discussion in the previous paragraph raises the possibility that \textquotedblleft sufficiently large\textquotedblright\ in this analysis may need to be $\exp(d)$.

Another open theoretical issue  is whether an equilibrium  always exists in this game between generator and discriminator. Just as a zero gradient is a necessary condition for standard optimization to halt, the corresponding necessary condition in a two-player game is an equilibrium.  Conceivably some of the instability often observed while training GANs could just arise because of lack of equilibrium.  (Recently \citet{arjovsky2017wasserstein} suggest that using their  Wasserstein objective in practice reduces instability, but we still lack proof of existence of an equilibrium.)
Standard game theory is of no help here because we need a so-called pure equilibrium, and  simple counter-examples such as  {\em rock/paper/scissors}  show that it doesn't exist in general\footnote{Such counterexamples are easily turned into toy GAN scenarios with generator and discriminator having finite capacity, and the game lacks a pure equilibrium. See Appendix~\ref{sec:examples}.}.

\subsection{Our Contributions}

We formally define generalization for GANs in Section~\ref{sec:generalize} and show that for previously studied notions of distance between distributions, generalization is not guaranteed (Lemma~\ref{lem:warmup}). In fact we show that the generator can win even when $\cD$ and $\cDr$ are arbitrarily far in any one of the standard metrics. 

However, we can guarantee some weaker notion of generalization by introducing a  new metric on distributions, the {\em neural net distance}. We show that generalization does happen with moderate number of training examples (i.e., when the generator wins, the two distributions must be close in neural net distance). However, this weaker metric comes at a cost: it can be near-zero even when the trained and target distributions are very far (Section~\ref{sec:diversity}). 

To explore the existence of equilibria we turn in Section~\ref{sec:infinitemix} to infinite mixtures of generator deep nets. These are clearly vastly more expressive than a single generator net: e.g., a standard result in bayesian nonparametrics says that every probability density is closely approximable by an infinite mixture of Gaussians~\citep{ghosh2003bayesian}. Thus unsurprisingly, an infinite mixture should win the game. We then prove rigorously that even a finite mixture of fairly reasonable size can closely approximate the performance of  the infinite mixture (Theorem~\ref{thm:mixedequilibrium}). 

This insight also allows us to construct a new architecture for the generator network where 
there exists an {\em approximate} equilibrium that is pure. (Roughly speaking, an approximate equilibrium is one in which neither of the players can gain much by deviating from their strategies.) 
This existence proof for an approximate equilibrium unfortunately involves a quadratic blowup in the \textquotedblleft size\textquotedblright\ of the generator (which is still better than the naive exponential blowup one might expect). Improving this is left for future theoretical work. But we propose a heuristic approximation to the mixture idea to introduce a new framework for training that we call \MG. It can be added on top of any existing GAN training procedure, including those that use divergence objectives. Experiments in Section~\ref{sec:experiments} show that for several previous techniques, \MG stabilizes  the training, and in some cases improves the performance. 
\label{sec:intro}
\section{Preliminaries}
\label{sec:prelim}

\noindent{\bf Notations.} Throughout the paper we use $d$ for the dimension of samples, and $p$ for the number of parameters in the generator/discriminator. In Section~\ref{sec:generalize} we use $m$ for number of samples. \\

\noindent{\bf Generators and discriminators.} 
Let $\{G_u, u\in\mathcal{U}\}$ ($\mathcal{U}\subset \R^p$) denote the class of generators, where $G_u$ is a function --- which is often a neural network in practice --- from $\R^\ell \rightarrow \R^d$ indexed by $u$ that denotes the parameters of the generators. Here $\mathcal{U}$ denotes the possible ranges of the parameters and without loss of generality we assume $\mathcal{U}$ is a subset of the unit ball\footnote{Otherwise we can scale the parameter properly by changing the parameterization. }. The generator $G_u$ defines a distribution $\cD_{G_u}$ as follows: generate $h$ from $\ell$-dimensional spherical Gaussian distribution and then apply $G_u$ on $h$ and generate a sample $x = G_u(h)$ of the distribution $\cD_{G_u}$. We drop the subscript $u$ in $\cD_{G_u}$ when it's clear from context.

Let $\{D_v, v\in\mathcal{V}\}$ denote the class of discriminators, where $D_v$ is function from $\R^d$ to $[0,1]$ and $v$ is the parameters of $D_v$. 

Training the discriminator consists of trying to make it output a high value (preferably $1$) when $x$ is sampled from distribution $\mathcal{D}_{real}$  and a low value (preferably $0$) when $x$ is sampled from the synthetic distribution $\cD_{G_u}$. Training the discriminator consists of trying to make its synthetic distribution  \textquotedblleft similar\textquotedblright to $\mathcal{D}_{real}$  in the sense that the discriminator's output tends to be similar on the two distributions.

We assume $G_u$ and $D_v$ are $L$-Lipschitz with respect to their parameters. That is, for all $u,u'\in \mathcal{U}$ and any input $h$, we have $\|G_u(h)-G_{u'}(h)\| \le L\|u-u'\|$ (similar for $D$).

Notice, this is distinct from the assumption (which we will also sometimes make) that functions $G_u, D_v$ are Lipschitz: that focuses on the change in function value when we change $x$, while keeping $u,v$ fixed\footnote{Both Lipschitz parameters can be exponential in the number of layers in the neural net, however our Theorems only depend on the $\log$ of the Lipschitz parameters}.

\noindent{\bf Objective functions.}
The standard GAN training~\citep{goodfellow2014generative} consists of training parameters $u, v$ so as to optimize an objective function:
\begin{equation}\label{eq:gan}
\min_{u\in \mathcal{U}} \max_{v\in\mathcal{V}} \E_{x\sim \mathcal{D}_{real}}[\log D_v(x)] + \E_{x\sim \cD_{G_u}}[\log (1-D_v(x))].
\end{equation}
Intuitively, this says that the discriminator $D_v$ should give high values $D_v(x)$ to the real samples and low values $D_v(x)$ to the generated examples.  The $\log$ function was suggested because of its interpretation as the likelihood, and it also has a nice information-theoretic interpretation described below. However, 
in practice it can cause problems since $\log x \rightarrow -\infty$ as $x \rightarrow 0$.  The objective still makes intuitive sense if we replace $\log$ by any monotone function $\phi:[0,1]\to \R$, which yields the objective: \begin{equation} \label{eq:gan-general}
\min_{u\in \mathcal{U}} \max_{v\in\mathcal{V}} \E_{x\sim \mathcal{D}_{real}}[\phi(D_v(x))] + \E_{x\sim \cD_{G_u}}[\phi(1-D_v(x))].
\end{equation}
We call function $\phi$ the {\em measuring function}. It should be concave so that when $\D_{real}$ and $\cD_G$ are the same distribution, the best strategy for the discriminator is just to output $1/2$ and the optimal value is $2\phi(1/2)$. In later proofs, we will require $\phi$ to be bounded and Lipschitz. Indeed, in practice training often uses  $\phi(x) = \log (\delta + (1-\delta)x)$ (which takes values in $[\log \delta,0]$ and is $1/\delta$-Lipschitz) and the recently proposed Wasserstein GAN~\citep{arjovsky2017wasserstein} objective uses $\phi(x) = x$. 

\noindent{\bf Training with finite samples.}
The objective function~\eqref{eq:gan-general} assumes we have infinite number of samples from $\cD_{real}$ to estimate the value $\E_{x\sim \mathcal{D}_{real}}[\phi(D_v(x))]$. With finite training examples $x_1,\dots, x_m \sim \cD_{real}$,  one uses $\frac{1}{m}\sum_{i=1}^m [\phi(D_v(x_i))]$ to estimate the quantity $\E_{x\sim \cD_{real}}[\phi(D_v(x))]$. We call the distribution that gives $1/m$ probability to each of the $x_i$'s the {\em empirical version} of the real distribution. Similarly, one can use a empirical version to estimate $\E_{x\sim \cD_{G_u}}[\phi (1-D_v(x))].$

\noindent{\bf Standard interpretation via distance between distributions.} Towards analyzing GANs, researchers have assumed access to \textit{infinite} number of examples and that the discriminator is chosen optimally within some \textit{large} class of functions that contain all possible neural nets. This often allows computing analytically the optimal discriminator and therefore removing the maximum operation from the objective~\eqref{eq:gan-general}, which leads to some interpretation of how and in what sense the resulting distribution $\cD_G$ is close to the true distribution $\mathcal{D}_{real}$. 

Using the original objective function~\eqref{eq:gan}, then the optimal choice among all the possible functions from $\R^d\rightarrow (0,1)$ is $D(x) = \frac{P_{real}(x)}{P_{real}(x)+P_G(x)}$, as shown in~\cite{goodfellow2014generative}. Here $P_{real}(x)$ is the density of $x$ in the real distribution, and $P_G(x)$ is the density of $x$ in the distribution generated by generator $G$. Using this discriminator --- though it's computationally infeasible to obtain it --- one can show that the minimization problem over the generator correspond to minimizing the Jensen-Shannon (JS) divergence between the true distribution $\mathcal{D}_{real}$ and the generative distribution $\cD_G$. Recall that for two distributions $\mu$ and $\nu$, the JS divergence is defined by
$$
d_{JS}(\mu,\nu) = \frac{1}{2}(KL(\mu\|\frac{\mu+\nu}{2})+KL(\nu\|\frac{\mu+\nu}{2}))\,.
$$

Other measuring functions $\phi$ and choice of discriminator class leads to different distance function between distribution other than JS divergence. Notably, ~\cite{arjovsky2017wasserstein} shows that when $\phi(t) = t$, and the discriminator is chosen among all $1$-Lipschitz functions, maxing out the discriminator, the generator is attempting to minimize the Wasserstein distance between $\cD_{real}$ and $D_u(h)$. Recall that Wasserstein distance between $\mu$ and $\nu$ is defined as $$d_{W}(\mu,\nu) = \sup_{D\mbox{~is 1-Lipschitz}} \left|\E_{x\sim \mu}[D(x)] - \E_{x\sim \nu}[D(x)]\right|\,.$$

\section{Generalization theory for GANs}
\label{sec:generalize}

The above interpretation of GANs in terms of  minimizing distance (such as JS divergence and Wasserstein distance) between the real distribution and the generated distribution relies on two crucial assumptions: (i)  very expressive class of discriminators such as the set of all bounded discriminator or the set of all 1-Lipschitz discriminators, and (ii) very large number of examples to compute/estimate the objective~\eqref{eq:gan} or \eqref{eq:gan-general}.
Neither assumption holds in practice, and we will show next that this greatly affects the generalization ability, a notion we introduce in Section~\ref{sec:def-gen}.


\subsection{Definition of Generalization}\label{sec:def-gen}

Our definition is motivated from supervised classification, where training is said to generalize if the training and test error closely track each other. (Since the purpose of GANs training is to learn a distribution, one could also consider a stronger definition of successful training, as discussed in Section~\ref{sec:diversity}.)

Let  $x_1,\dots, x_m$ be the training examples, and let $\hat{\cD}_{real}$ denote the uniform distribution over $x_1,\dots, x_m$. Similarly, let $G_u(h_1),\dots, G_u(h_{r})$ be a set of $r$ examples from the generated distribution $\cD_G$.  In the training of GANs, one implicitly uses $\E_{x\sim \hatDreal} [\phi(D_v(x))]$ to approximate the quantity $\E_{x\sim \Dreal} [\phi(D_v(x))]$. Inspired by the observation that the training objective of GANs and its variants is to minimize some distance (or divergence) $d(\cdot,\cdot)$ between $\Dreal$ and $\cD_G$ using finite samples, we define the generalization of GANs as follows: 

\begin{definition}\label{def:generalization}
	Given $\hatDreal$, an empirical version of the true distribution with $m$ samples, a generated distribution $\cD_G$ {\em generalizes} under the divergence or distance between distributions $d(\cdot,\cdot)$  with generalization error $\eps$ if the following holds with high probability\footnote{over the choice of $\hat{\cD}_G$}, 
	\begin{equation}
	\left|d(\Dreal,\cD_G) - d(\hatDreal,\hat{\cD}_G) \right| \le  \eps \label{eqn:generalization}
	\end{equation}
	where and $\hat{\cD}_G$ is an empirical version of the generated distribution $\cD_G$ with polynomial number of samples (drawn after $\cD_G$ is fixed). 
\end{definition}
In words, generalization in GANs means that the population distance between the true and generated distribution is close to the empirical distance between the empirical distributions. Our target is to make the former distance small, whereas the latter one is what we can access and minimize in practice. The definition allows only polynomial number of samples from the generated distribution because the training algorithm should run in polynomial time.  

We also note that stronger versions of Definition~\ref{def:generalization} can be considered. For example, as an analog of uniform convergence in supervised learning, we can require~\eqref{eqn:generalization} to hold for \textit{all} generators $\cD_G$ among a class of candidate generators. Indeed, our results in Section~\ref{sec:neural-net-gen} show that \textit{all} generators generalize under neural net distance with reasonable number of examples. 

\subsection{JS Divergence and Wasserstein don't Generalize}\label{sec:fail-to-generalize}

As a warm-up, we show that JS divergence and Wasserstein distance don't generalize with any polynomial number of examples because the population distance (divergence) is not reflected by the empirical distance. \begin{lemma}\label{lem:warmup}
	Let $\mu$ be uniform Gaussian distributions $\N(0,\frac{1}{d}I)$ and $\hat{\mu}$ be an empirical versions of $\mu$ with $m$ examples. Then we have $d_{JS}(\mu,\hat{\mu}) = \log 2$, $d_{W}(\mu,\hat{\mu}) \ge 1.1$.
\end{lemma}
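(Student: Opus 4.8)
The plan is to prove the two claims separately, in each case exploiting the fact that $\hat\mu$ is supported on the finite set $S=\{x_1,\dots,x_m\}$, which has $\mu$-measure zero, while $\mu=\N(0,\tfrac1d I)$ is absolutely continuous. For the JS divergence the key point is that $\mu$ and $\hat\mu$ are \emph{mutually singular}, so their JS divergence is forced to its maximal value $\log 2$ and no high-dimensional concentration is needed. Concretely, write $\nu=\tfrac12(\mu+\hat\mu)$. On the complement $\R^d\setminus S$ the atomic part vanishes, so $\nu=\tfrac12\mu$ there, giving $\frac{d\mu}{d\nu}=2$ and $KL(\mu\|\nu)=\int_{\R^d\setminus S}\log 2\,d\mu=\log 2$ (using $\mu(S)=0$). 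On each atom, $\hat\mu(\{x_i\})=1/m$ and $\nu(\{x_i\})=\tfrac12(0+1/m)=1/(2m)$, so $\frac{d\hat\mu}{d\nu}=2$ and $KL(\hat\mu\|\nu)=\sum_i\tfrac1m\log 2=\log 2$. Averaging yields $d_{JS}(\mu,\hat\mu)=\log 2$, which holds in every dimension.

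For the Wasserstein distance I would use the dual (Kantorovich--Rubinstein) formulation that is already stated in the excerpt and simply plug in one convenient $1$-Lipschitz test function, namely the distance to the empirical support $D(x)=\min_i\|x-x_i\|=\mathrm{dist}(x,S)$. Since each $x_i\in S$, we get $\E_{x\sim\hat\mu}[D(x)]=\frac1m\sum_i\mathrm{dist}(x_i,S)=0$, so that $d_W(\mu,\hat\mu)\ge \E_{x\sim\mu}[\min_i\|x-x_i\|]$. It then suffices to show this expectation is at least $1.1$, which is the curse-of-dimensionality heart of the matter: a fresh Gaussian sample is far from every point of a fixed, polynomial-size set.

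To lower bound $\E_{x\sim\mu}[\min_i\|x-x_i\|]$, I would expand $\|x-x_i\|^2=\|x\|^2-2\langle x,x_i\rangle+\|x_i\|^2$. Standard $\chi^2$ concentration gives $\|x\|^2=1\pm o(1)$ and, over the fixed draw of the samples, $\|x_i\|^2=1\pm o(1)$ with high probability. Conditioned on $x_i$, the inner product $\langle x,x_i\rangle$ is Gaussian with variance $\|x_i\|^2/d=O(1/d)$, so $|\langle x,x_i\rangle|\le\sqrt{2\log(2m/\delta)\,\|x_i\|^2/d}$ holds for all $i$ simultaneously with probability $1-\delta$ by a union bound. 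For $m=\mathrm{poly}(d)$ this cross term is $O(\sqrt{\log d/d})=o(1)$, whence $\min_i\|x-x_i\|^2\ge 2-o(1)$ and $\min_i\|x-x_i\|\ge\sqrt2-o(1)$ with high probability. Since the minimum is nonnegative, its expectation is also at least $\sqrt2-o(1)>1.1$ once $d$ is large enough, giving $d_W(\mu,\hat\mu)\ge 1.1$.

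The main obstacle is precisely this last concentration step, and in particular the union bound over the $m$ inner products: this is exactly where the restriction to a polynomial (or at worst sub-exponential) number of samples enters, since with exponentially many points some $x_i$ could align with the fresh sample $x$ and pull the minimum distance below $\sqrt2$. Everything else is either an exact measure-theoretic identity (the JS part) or an immediate consequence of choosing the right $1$-Lipschitz witness (the reduction of the Wasserstein bound to a single expectation), so the real work is confined to quantifying the concentration and checking that $\sqrt2-o(1)$ comfortably exceeds the stated constant $1.1$.
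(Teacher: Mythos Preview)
Your proposal is correct and follows essentially the same route as the paper: the JS claim is the paper's one-line mutual-singularity observation written out in detail, and for the Wasserstein bound you reach $d_W(\mu,\hat\mu)\ge\E_{y\sim\mu}[\min_i\|y-x_i\|]$ via the dual witness $D(x)=\mathrm{dist}(x,S)$ whereas the paper obtains the identical inequality from the primal earth-mover interpretation, after which both finish with the same Gaussian concentration plus union bound over the $m$ samples. If anything, your version is more careful about needing $\|x_i\|^2\approx 1$ (i.e., that the empirical points are themselves typical draws from $\mu$), a point the paper's ``fixed arbitrarily'' phrasing glosses over.
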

There are two consequences of Lemma 1. First, consider the situation where $\Dreal = \cD_G =  \mu$. Then we have that $d_W(\Dreal, \cD_G) = 0$ but $d_W(\hatDreal,\hat{\cD}_G) > 1$ as long as we have polynomial number of examples. This violates the generalization definition equation ~\eqref{eqn:generalization}. 

Second, consider the case $\Dreal = \mu$ and $\cD_G = \hatDreal = \hat{\mu}$, that is, $\cD_G$ memorizes all of the training examples in $\hatDreal$. In this case, since $\cD_G$ is a discrete distribution with finite supports, with enough (polynomial) examples, in $\hat{\cD}_G$, effectively we also have that $\hat{\cD}_G \approx \cD_G$. Therefore, we have that $d_W(\hatDreal,\hat{\cD}_G) \approx 0$ whereas $d_W(\Dreal, \cD_G) > 1$. In other words, with any polynomial number of examples, it's possible to overfit to the training examples using Wasserstein distance. The same argument also applies to JS divergence. See Appendix~\ref{sec:app:generalize} for the formal proof.
Notice, this result does not contradict the experiments of~\cite{arjovsky2017wasserstein} since they actually use not Wasserstein distance but  a surrogate distance that does generalize, as we show next.

\subsection{Generalization bounds for neural net distance}\label{sec:neural-net-gen}

Which distance measure between $\Dreal$ and $\cD_G$ is the GAN objective actually minimizing and can we analyze its generalization performance? Towards answering these questions in full generality (given multiple GANs objectives) we consider the following general distance measure that unifies JS divergence, Wasserstein distance, and the neural net distance that we define later in this section.

\begin{definition}[$\mathcal{F}$-distance]
	Let $\mathcal{F}$ be a class of functions from $\R^d$ to $[0,1]$ such that if $f\in \mathcal{F}, 1-f\in \mathcal{F}$. Let $\phi$ be a concave measuring function. Then the $\mathcal{F}${\em -divergence with respect to $\phi$ between two distributions} $\mu$ and $\nu$ supported on $\R^d$ is defined as 
	\begin{align}
	d_{\mathcal{F},\phi}(\mu, \nu) = & \sup_{D\in \mathcal{F}} \E_{x\sim \mu }[\phi(D(x))] + \E_{x\sim \nu}[\phi(1-D(x))] - 2\phi(1/2) \nonumber
	\end{align}
	When $\phi(t) = t$, we have that $d_{\mathcal{F},\phi}$ is a distance function \footnote{Technically it is a pseudometric. This is also known as integral probability metrics\citep{muller1997integral}.}  	, and with slightly abuse of notation we write it simply as $d_{\mathcal{F}}(\mu,\nu) = \sup_{D\in \mathcal{F}} \left|\E_{x\sim \mu }[D(x)] - \E_{x\sim \nu}[D(x)]\right|.$
\end{definition}

\begin{example}
	When $\phi(t) = \log (t)$ and $\mathcal{F} = \{\textrm{all functions from $\R^d$ to $[0,1]$}\}$, we have that $d_{\mathcal{F}, \phi}$ is the same as JS divergence. When $\phi(t) = t$ and $\mathcal{F} = \{\textrm{all 1-Lipschitz functions from $\R^d$ to $[0,1]$}\}$, then $d_{\mathcal{F},\phi}$ is the Wasserstein distance. 
\end{example}

\begin{example}
	Suppose $\mathcal{F}$ is a set of neural networks and $\phi(t) = \log t$, then original GAN objective function is equivalent to $\min_{G} d_{\mathcal{F}, \phi}(\hatDreal, \hat{\cD}_G)\,.$
	
	Suppose $\mathcal{F}$ is the set of neural networks, and $\phi(t) = t$, then the objective function used empirically in~\cite{arjovsky2017wasserstein} is equivalent to $\min_{G} d_{\mathcal{F}}(\hatDreal, \hat{\cD}_G)\,.$
	
\end{example}

GANs training uses $\mathcal{F}$ to be a class of neural nets with a bound $p$ on the number of parameters. We then informally refer to $d_{\mathcal{F}}$ as the neural net distance. 
The next theorem establishes generalization in the sense of equation~\eqref{eqn:generalization} does hold for it (with a uniform convergence) .
We assume that the measuring function takes values in $[-\Delta,\Delta]$ and that it is $L_\phi$-Lipschitz. Further, $\mathcal{F} = \{D_v, v\in \mathcal{V}\}$ is the class of discriminators that is $L$-Lipschitz with respect to the parameters $v$. As usual, we use $p$ to denote the number of parameters in $v$. 

\begin{theorem}
	\label{thm:nngeneralizesingle}
	In the setting of previous paragraph, let $\mu,\nu$ be two distributions and $\hat{\mu},\hat{\nu}$ be empirical versions with at least $m$ samples each. There is a universal constant $c$ such that  when $m \ge \frac{cp\Delta^2\log (LL_{\phi}p/\epsilon)}{\epsilon^2}$,  we have with probability at least $1-\exp(-p)$ over the randomness of $\hat{\mu}$ and $\hat{\nu}$, 
	$$
	|d_{\mathcal{F},\phi}(\hat{\mu}, \hat{\nu}) - d_{\mathcal{F},\phi}(\mu, \nu)| \le \epsilon.
	$$
\end{theorem}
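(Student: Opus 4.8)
The additive constant $-2\phi(1/2)$ is the same in both $d_{\mathcal{F},\phi}(\hat\mu,\hat\nu)$ and $d_{\mathcal{F},\phi}(\mu,\nu)$, so it cancels and can be ignored. Write $A(D) = \E_{x\sim\mu}[\phi(D(x))] + \E_{x\sim\nu}[\phi(1-D(x))]$ for the population objective and $\hat A(D)$ for its empirical analogue with $\mu,\nu$ replaced by $\hat\mu,\hat\nu$. Since $|\sup_D A(D) - \sup_D \hat A(D)| \le \sup_{D\in\mathcal{F}} |A(D) - \hat A(D)|$, it suffices to bound this uniform deviation by $\epsilon$. By the triangle inequality $\sup_D|A-\hat A| \le \sup_D \big|(\E_\mu-\E_{\hat\mu})[\phi(D)]\big| + \sup_D\big|(\E_\nu-\E_{\hat\nu})[\phi(1-D)]\big|$, so it is enough to control each of the two (symmetric) terms by $\epsilon/2$; I will do the $\mu$ term, the $\nu$ term being identical after noting that $v\mapsto\phi(1-D_v(x))$ is $LL_\phi$-Lipschitz in $v$ by the same reasoning as below.

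\textbf{Net over the parameter space plus concentration.} For a \emph{fixed} $D_v$, the quantity $\E_{\hat\mu}[\phi(D_v(x))] = \frac1m\sum_{i=1}^m \phi(D_v(x_i))$ is an average of $m$ i.i.d.\ random variables lying in $[-\Delta,\Delta]$, so Hoeffding's inequality gives $\Pr[|(\E_\mu - \E_{\hat\mu})[\phi(D_v)]| > t] \le 2\exp(-mt^2/(2\Delta^2))$. To upgrade this to a bound uniform over $v\in\mathcal{V}$, I would pass to a finite $\delta$-net. Because $D_v$ is $L$-Lipschitz in $v$ and $\phi$ is $L_\phi$-Lipschitz, the map $v\mapsto \phi(D_v(x))$ is $LL_\phi$-Lipschitz in $v$ uniformly over $x$; hence replacing $v$ by the nearest net point $v_0$ changes both the population and empirical averages by at most $LL_\phi\,\delta$. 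Since $\mathcal{V}$ lies in the unit ball of $\R^p$, a standard volumetric estimate gives a net $N_\delta$ with $|N_\delta| \le (3/\delta)^p$.

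\textbf{Combining the pieces and choosing parameters.} Apply the Hoeffding tail at level $t=\epsilon/4$ to each of the $|N_\delta|$ net points and union-bound, and choose $\delta = \epsilon/(8LL_\phi)$ so that the discretization error $LL_\phi\delta \le \epsilon/8$ is absorbed; then the $\mu$ term exceeds $\epsilon/2$ with probability at most $2(3/\delta)^p\exp(-m\epsilon^2/(32\Delta^2))$. Taking $\log$, this is below $\exp(-p)$ once $m \gtrsim \frac{\Delta^2}{\epsilon^2}\big(p\log(LL_\phi/\epsilon) + p\big)$, which matches the stated requirement $m \ge c\,p\Delta^2\log(LL_\phi p/\epsilon)/\epsilon^2$ up to constants. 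A final union bound over the two terms (and the $\nu$ side, which is handled identically) gives the claim with probability at least $1-\exp(-p)$.

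\textbf{Main obstacle.} The only real difficulty is handling the \emph{infinite} discriminator class $\mathcal{F}$ uniformly: concentration for a single $D$ is immediate, but naively union-bounding is impossible. The Lipschitz-in-parameters hypothesis is exactly the lever that converts the infinite supremum into a union bound over an $\epsilon$-net, and the key quantitative point is that although the net has size exponential in $p$, the sub-Gaussian (Hoeffding) tails mean its cardinality enters the sample complexity only through its logarithm. Balancing the net's discretization error against the concentration level is what produces the $p\Delta^2/\epsilon^2$ rate with a logarithmic dependence on the Lipschitz constants; tracking constants while merging the $\mu$ and $\nu$ terms is the only bookkeeping that needs care.
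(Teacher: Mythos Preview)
Your proposal is correct and follows essentially the same route as the paper: reduce to a uniform deviation over $\mathcal{F}$, split into the $\mu$- and $\nu$-terms, cover $\mathcal{V}$ by an $\epsilon/(8LL_\phi)$-net, apply Hoeffding at each net point with deviation $\epsilon/4$, union-bound, and extend to all $v$ via the $LL_\phi$-Lipschitz dependence on parameters. The paper's write-up phrases the reduction step slightly differently (it picks the optimal discriminator for one side and argues ``the other direction is similar''), but the substance is identical.
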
 

See Appendix~\ref{sec:app:generalize} for the proof. The intuition is that there aren't too many distinct discriminators, and thus given enough samples the expectation over the empirical distribution converges to the expectation over the true distribution for {\em all} discriminators.

Theorem~\ref{thm:nngeneralizesingle} shows that the neural network divergence (and neural network distance) has a much better generalization properties than Jensen-Shannon divergence or Wasserstein distance. If the GAN successfully minimized the neural network divergence between the empirical distributions, that is, $d(\hatDreal, \hat{\cD}_G)$, then we know the neural network divergence $d(\Dreal, \cD_G)$ between the distributions $\D_{real}$ and $\cD_G$ is also small. It is possible to change the proof to also show that this generalization continues to hold at every iteration of the training as shown in the following corollary.

\begin{corollary}
\label{cor:nngeneralizeall}
In the setting of Theorem~\ref{thm:nngeneralizesingle}, suppose $G^{(1)},G^{(2)},...,G^{(K)}$ ($\log K\ll d$) be the $K$ generators in the $K$ iterations of the training, and assume $\log K \le p$. There is a some universal constant $c$ such that when $m \ge \frac{cp\Delta^2\log (LL_{\phi}p/\epsilon)}{\epsilon^2}$, with probability at least $1-\exp(-p)$, for all $t\in [K]$, 
\begin{align}
\left|d_{\mathcal{F},\phi}(\Dreal, \cD_{G^{(t)}}) -d_{\mathcal{F},\phi}(\hatDreal, \hat{\cD}_{G^{(t)}}) \right|\le \eps\,.
\nonumber
\end{align}
\end{corollary}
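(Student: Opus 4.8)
The plan is to \emph{not} apply Theorem~\ref{thm:nngeneralizesingle} as a black box to each pair $(\Dreal,\cD_{G^{(t)}})$: this fails because $\cD_{G^{(t)}}$ is produced by the training procedure and is therefore correlated with $\hatDreal$, violating the independence between a fixed distribution and its empirical draw that the theorem presumes. Instead I would first reduce the target quantity to one-sided \emph{uniform} deviations over the discriminator class $\mathcal F$. Writing $d_{\mathcal F,\phi}(\mu,\nu)=\sup_{D\in\mathcal F}\left(\E_{x\sim\mu}[\phi(D(x))]+\E_{x\sim\nu}[\phi(1-D(x))]\right)-2\phi(1/2)$ and using the elementary inequality $|\sup_D f(D)-\sup_D g(D)|\le\sup_D|f(D)-g(D)|$ together with the triangle inequality, for every fixed $t$ one obtains
\begin{align}
\left|d_{\mathcal F,\phi}(\Dreal,\cD_{G^{(t)}})-d_{\mathcal F,\phi}(\hatDreal,\hat{\cD}_{G^{(t)}})\right|\le A+B_t,\nonumber
\end{align}
where $A=\sup_{D\in\mathcal F}\left|\E_{x\sim\Dreal}[\phi(D(x))]-\E_{x\sim\hatDreal}[\phi(D(x))]\right|$ is a \emph{single} quantity depending only on the real samples, and $B_t=\sup_{D\in\mathcal F}\left|\E_{x\sim\cD_{G^{(t)}}}[\phi(1-D(x))]-\E_{x\sim\hat{\cD}_{G^{(t)}}}[\phi(1-D(x))]\right|$ depends on the $t$-th batch of generated samples. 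The crucial gain is that the real-side term $A$ does not depend on the generator index $t$.

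Second, I would bound $A$ and each $B_t$ by $\epsilon/2$ using exactly the uniform-convergence estimate underlying Theorem~\ref{thm:nngeneralizesingle}: cover $\mathcal F$ by an $\epsilon$-net of cardinality $\exp(O(p\log(LL_\phi p/\epsilon)))$ and apply a Hoeffding bound to each net element, using that $\phi\circ D$ is bounded in $[-\Delta,\Delta]$ and Lipschitz in the parameters. Since $A$ is uniform over all $D\in\mathcal F$, the single event $\{A\le\epsilon/2\}$ automatically certifies the real side for every one of the $K$ generators at once, regardless of how the $G^{(t)}$ were chosen; this is precisely what neutralizes the data-dependence of the generators. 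For the generated side I would condition on the (data-dependent but now fixed) generator $G^{(t)}$ and use that $\hat{\cD}_{G^{(t)}}$ consists of fresh samples drawn after $G^{(t)}$ is fixed, so that $\{B_t\le\epsilon/2\}$ holds conditionally with the same high probability; a union bound over $t\in[K]$ then handles all generated sides simultaneously.

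Finally I would balance the failure probabilities. Choosing the universal constant $c$ larger than in Theorem~\ref{thm:nngeneralizesingle} makes the net argument yield per-event failure probability at most $\exp(-3p)$ under $m\ge\frac{cp\Delta^2\log(LL_\phi p/\epsilon)}{\epsilon^2}$, since the net exponent is $O(p\log(LL_\phi p/\epsilon))$ and a constant-factor increase in $m$ turns the bound from $\exp(-p)$ into $\exp(-3p)$. A union bound over the single real-side event and the $K$ generated-side events then gives total failure at most $(K+1)\exp(-3p)$, and here the hypothesis $\log K\le p$ enters decisively: it gives $K\le e^{p}$, so $(K+1)\exp(-3p)\le 2e^{p}e^{-3p}\le\exp(-p)$. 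On the complementary event $A+B_t\le\epsilon$ for all $t$, which is the claim. I expect the only genuinely delicate point to be the decoupling in the first step --- arguing that the generated-side deviations may be treated as conditionally independent of the training given $G^{(t)}$, while the real-side deviation is absorbed once and for all by its uniformity over $\mathcal F$ --- together with checking that the union bound survives because $K$ is at most $e^{p}$; the concentration estimates themselves are routine given Theorem~\ref{thm:nngeneralizesingle}.
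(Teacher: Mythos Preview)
Your proposal is correct and follows essentially the same approach as the paper: split the deviation into a real-side term (which is uniform over $\mathcal F$ and hence controlled once for all generators regardless of their data-dependence) and a generated-side term for each $t$ (handled by fresh samples and a union bound over $K$). Your treatment is in fact more careful than the paper's three-sentence sketch, since you explicitly account for how the hypothesis $\log K\le p$ is consumed in the union bound by inflating the constant $c$ to drive the per-event failure to $\exp(-3p)$.
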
 

The key observation here is that the objective is separated into two parts and the generator is not directly related to $\D_{real}$. So even though we don't have fresh examples, the generalization bound still holds. Detailed proof appears in Appendix~\ref{sec:app:generalize}.

\subsection{Generalization vs Diversity}\label{sec:diversity} 
Since the final goal of GANs training is to learn a distribution, it is worth understanding that though
weak generalization in the sense of Section~\ref{sec:neural-net-gen} is guaranteed, it comes with a cost. For JS divergence and Wasserstein distance, when the distance between two distributions $\mu,\nu$ is small, it is safe to conclude that the distributions $\mu$ and $\nu$ are almost the same. However, the neural net distance $d_{NN}(\mu,\nu)$ can be small even if $\mu,\nu$ are not very close. As a simple Corollary of Lemma~\ref{thm:nngeneralizesingle}, we obtain:

\begin{corollary}[Low-capacity discriminators cannot detect lack of diversity]\label{corr:lackofdiversity}
	Let $\hat{\mu}$ be the empirical version of distribution $\mu$ with $m$ samples. There is a some universal constant $c$ such that  when $m \ge \frac{cp\Delta^2\log (LL_{\phi}p/\epsilon)}{\epsilon^2}$, we have that with probability at least $1-\exp(-p)$, $d_{\mathcal{F,\phi}}(\mu,\hat{\mu}) \le \epsilon.
	$
\end{corollary}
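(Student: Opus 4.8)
The plan is to read the bound off from the uniform-convergence estimate that sits inside the proof of Theorem~\ref{thm:nngeneralizesingle}, together with the elementary fact that $\mu$ has $\mathcal{F}$-distance zero from itself. First I would record that $d_{\mathcal{F},\phi}(\mu,\mu)=0$. Indeed, for every $D\in\mathcal{F}$ and every point $x$, concavity of the measuring function gives the pointwise inequality $\phi(D(x))+\phi(1-D(x))\le 2\phi\!\left(\tfrac{D(x)+(1-D(x))}{2}\right)=2\phi(1/2)$, so that $\E_{x\sim\mu}[\phi(D(x))]+\E_{x\sim\mu}[\phi(1-D(x))]-2\phi(1/2)\le 0$, with equality attained by the constant discriminator $D\equiv 1/2\in\mathcal{F}$. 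Hence the supremum defining $d_{\mathcal{F},\phi}(\mu,\mu)$ equals $0$.

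Next I would extract the probabilistic core of the proof of Theorem~\ref{thm:nngeneralizesingle}: the single-distribution uniform deviation bound asserting that, for the stated $m$, with probability at least $1-\exp(-p)$ over the draw of $\hat\mu$,
\[
\sup_{D\in\mathcal{F}}\Big|\E_{x\sim\mu}[\phi(D(x))]-\E_{x\sim\hat\mu}[\phi(D(x))]\Big|\le \epsilon .
\]
This is exactly the quantity that theorem's proof controls: since $\phi$ takes values in $[-\Delta,\Delta]$ and the map $v\mapsto \phi(D_v(x))$ is $LL_\phi$-Lipschitz, one covers the parameter set $\mathcal{V}$ by an $\epsilon$-net whose log-cardinality is $O(p\log(LL_\phi p/\epsilon))$, applies a Hoeffding bound at each net point, and takes a union bound; the sample complexity $m\ge cp\Delta^2\log(LL_\phi p/\epsilon)/\epsilon^2$ is precisely what drives the failure probability below $\exp(-p)$. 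I expect this to be the only step carrying genuine content, and since it is inherited wholesale from Theorem~\ref{thm:nngeneralizesingle} I only need to absorb a constant factor into $c$ so that the right-hand side reads $\epsilon$ rather than $\epsilon/2$.

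Finally I would combine the two ingredients. Conditioning on the above event and using the closure property $1-D\in\mathcal{F}$ (so the uniform bound applies to $\phi(1-D)$ as well), for every $D\in\mathcal{F}$
\[
\E_{x\sim\mu}[\phi(D(x))]+\E_{x\sim\hat\mu}[\phi(1-D(x))]
\le \E_{x\sim\mu}[\phi(D(x))]+\E_{x\sim\mu}[\phi(1-D(x))]+\epsilon
\le 2\phi(1/2)+\epsilon,
\]
where the last step is the pointwise concavity bound from the first paragraph. Taking the supremum over $D\in\mathcal{F}$ and subtracting $2\phi(1/2)$ yields $d_{\mathcal{F},\phi}(\mu,\hat\mu)\le\epsilon$, as required.

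The one point I would flag as delicate is that the corollary concerns the \emph{mixed} quantity $d_{\mathcal{F},\phi}(\mu,\hat\mu)$, whereas Theorem~\ref{thm:nngeneralizesingle} compares $d_{\mathcal{F},\phi}(\hat\mu,\hat\nu)$ with $d_{\mathcal{F},\phi}(\mu,\nu)$; one therefore cannot quote the theorem verbatim but must reuse its internal single-distribution uniform bound. This is exactly where the computation $d_{\mathcal{F},\phi}(\mu,\mu)=0$ does the real work, since it lets the concavity inequality replace the ``second'' distribution after the empirical factor has been swapped for the population one. Everything else is bookkeeping, and the only quantity to track carefully is the universal constant absorbed into $c$.
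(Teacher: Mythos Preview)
Your proposal is correct and is essentially the argument the paper has in mind: the paper states the result as an immediate corollary of Theorem~\ref{thm:nngeneralizesingle} without spelling out a proof, and the natural way to extract it is precisely to reuse the single-distribution uniform deviation bound (equation~\eqref{eq:bound1} in the appendix proof) together with the concavity calculation $d_{\mathcal{F},\phi}(\mu,\mu)=0$. Your observation that the theorem statement itself does not literally cover the mixed quantity $d_{\mathcal{F},\phi}(\mu,\hat\mu)$, and that one must go back to the internal bound, is exactly the right point to flag.
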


That is, the neural network distance for nets with $p$ parameters cannot distinguish between a distribution $\mu$ and a  distribution with support $\tilde{O}(p/\epsilon^2)$. In fact the proof still works if the disriminator is allowed to take many more samples from $\mu$; the reason they don't help  is that its  capacity is limited to $p$. 

We note that similar results have been shown before in study of pseudorandomness~\citep{trevisan2009regularity} and model criticism~\citep{gretton2012kernel}.

\section{Expressive power and existence of equilibrium}
\label{sec:infinitemix}

Section~\ref{sec:generalize} clarified the notion of generalization for GANs: namely, neural-net divergence between the generated distribution $\cD$ 
and $\cDr$ on the empirical samples closely tracks the divergence on the full distribution (i.e., unseen samples). But this doesn't explain why in practice the generator usually \textquotedblleft wins\textquotedblright  so that
the discriminator is unable to do much better than random guessing at the end. In other words, was it sheer luck that 
so many real-life distributions $\cDr$  turned out to be close in neural-net distance to a distribution produced by a fairly compact neural net? This section suggests no  luck may be needed.

The explanation starts with a thought experiment. Imagine allowing a much more powerful generator, namely, an infinite mixture of deep nets, each of size $p$.
So long as the deep net class is capable of generating simple gaussians, such mixtures are quite powerful, since a classical result says that an infinite mixtures of simple gaussians can closely approximate $\cDr$. Thus  an infinite mixture of deep net generators will \textquotedblleft win\textquotedblright\ the GAN game, not only against a  discriminator that is a small deep net but also 
against more powerful discriminators (e.g., any Lipschitz function). 

The next stage in the thought experiment is to imagine a much less powerful generator, which is a mix of only a few deep nets, not infinitely many. Simple counterexamples show that now the distribution $\cD$ will not closely approximate arbitrary $\cDr$ with respect to natural metrics like $\ell_p$. Nevertheless, could the generator still win the GAN game against a deep net of bounded capacity (i.e., the deep net is unable to distinguish $\cD$ and $\cDr$)? We show it can.

{\sc informal theorem:} {\em If the discriminator is a deep net with $p$ parameters, then a mixture of $\tilde{O}(p\log (p/\epsilon)/\epsilon^2)$ generator nets can produce a distribution $\cD$ that the discriminator will be unable to distinguish from $\cDr$ with probability more than $\epsilon$. (Here $\tilde{O}(\cdot)$ notation hides some nuisance factors.)} 

This informal theorem  is also a component of our result below about the existence of an approximate pure equilibrium. We will first show that a finite mixture of generators can ``win'' against all discriminators, and then discuss how this mixed generator can be realized as a single generator network that is 1-layer deeper.


\subsection{Equilibrium using a Mixture of Generators}


For a class of generators $\{G_u,u\in \U\}$ and a class of discriminators $\{D_v,v\in \V\}$, we can define the payoff $F(u,v)$ of the  game between generator and discriminator
\begin{equation}\label{eq:payoff}
F(u,v) = \E_{x\sim \D_{real}}[\phi(D_v(x))] + \E_{x\sim \cD_G}[\phi(1-D_v(x)))].
\end{equation}
Of course as we discussed in previous section, in practice these expectations should be with respect to the empirical distributions. Our discussions in this section does not depend on the distributions $\D_{real}$ and $\D_h$, so we define $F(u,v)$ this way for simplicity.

The well-known min-max theorem~\citep{v1928theorie} in game theory shows if both players are allowed to play {\em mixed strategies} then the game has a min-max solution. A mixed strategy for the generator is just a distribution $\S_u$ supported on $\U$, and one for discriminator is a distribution $S_v$ supported on $\V$.

\begin{theorem}[von Neumann]\label{thm:minmax}
There exists a value $V$, and a pair of mixed strategies ($\mathcal{S}_u$,$\mathcal{S}_v$) such that
$$\forall v, \quad \E_{u\sim \mathcal{S}_u}[F(u,v)] \le V\quad\mbox{and}\quad \forall u, \quad \E_{v\sim \mathcal{S}_v}[F(u,v)] \ge V.$$
\end{theorem}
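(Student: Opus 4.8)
The obstacle that blocks a direct argument is that the raw payoff $F(u,v)$ need not be convex in $u$ nor concave in $v$ as a function of the parameters, so no saddle point need exist for the game played directly over $\U\times\V$ (indeed the rock/paper/scissors example in the introduction shows pure equilibria can fail). The plan is therefore to pass to the game over \emph{mixed} strategies, where the payoff becomes bilinear and hence trivially convex-concave, and then invoke a classical minimax theorem. Concretely, I would extend $F$ to pairs of distributions by setting $\bar F(\mathcal{S}_u,\mathcal{S}_v) = \E_{u\sim \mathcal{S}_u}\E_{v\sim \mathcal{S}_v}[F(u,v)]$, where $\mathcal{S}_u$ ranges over probability distributions supported on $\U$ and $\mathcal{S}_v$ over those supported on $\V$. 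By construction $\bar F$ is affine (in fact linear) in $\mathcal{S}_u$ for each fixed $\mathcal{S}_v$, and affine in $\mathcal{S}_v$ for each fixed $\mathcal{S}_u$; this linearity is exactly what the raw game lacked.

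Next I would verify the hypotheses needed for a minimax theorem and apply it. Since $\U,\V$ are contained in the unit ball, after passing to their closures they are compact; and because $\phi$ is bounded and Lipschitz and $D_v$ is Lipschitz in its parameters (and $\cD_{G_u}$ depends continuously on $u$ via the parameter-Lipschitzness of $G_u$), the function $F(u,v)$ is bounded and continuous on $\U\times\V$. The sets of probability measures on these compact spaces are convex and, in the weak-$*$ topology, compact by Banach--Alaoglu/Prokhorov, and $\bar F$ is continuous and linear in each argument. These are precisely the conditions of Sion's minimax theorem (a convex-concave payoff, here linear-linear, on a product of compact convex sets), which gives
$$\min_{\mathcal{S}_u}\max_{\mathcal{S}_v}\bar F(\mathcal{S}_u,\mathcal{S}_v) = \max_{\mathcal{S}_v}\min_{\mathcal{S}_u}\bar F(\mathcal{S}_u,\mathcal{S}_v) =: V,$$
with the outer optima attained at some pair $(\mathcal{S}_u^{*},\mathcal{S}_v^{*})$.

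Finally I would read off the two stated inequalities from this saddle point. Because $\mathcal{S}_u^{*}$ attains the outer minimum, $V = \max_{\mathcal{S}_v}\bar F(\mathcal{S}_u^{*},\mathcal{S}_v)$, so $\bar F(\mathcal{S}_u^{*},\mathcal{S}_v)\le V$ for every $\mathcal{S}_v$; specializing $\mathcal{S}_v$ to the point mass at an arbitrary $v$ gives $\E_{u\sim \mathcal{S}_u^{*}}[F(u,v)]\le V$ for all $v$. Symmetrically, since $\mathcal{S}_v^{*}$ attains the outer maximum, $\bar F(\mathcal{S}_u,\mathcal{S}_v^{*})\ge V$ for every $\mathcal{S}_u$, and taking point masses yields $\E_{v\sim \mathcal{S}_v^{*}}[F(u,v)]\ge V$ for all $u$; by linearity these pure-strategy conditions are in fact equivalent to the mixed-strategy ones. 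Taking $\mathcal{S}_u=\mathcal{S}_u^{*}$ and $\mathcal{S}_v=\mathcal{S}_v^{*}$ then proves the theorem. The main work (and the only place any care is required) is the second paragraph: confirming compactness of the strategy spaces and continuity of $F$ so that a minimax theorem applies and the optima are genuinely attained; everything else is either the definitional lift or an immediate consequence of the saddle-point identity.
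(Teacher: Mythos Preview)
The paper does not prove this theorem at all: it is stated as a classical result, attributed to von Neumann (with the citation \citep{v1928theorie}), and immediately used as a black box in the subsequent development. There is therefore no ``paper's own proof'' to compare against.

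Your argument is a correct and standard derivation of the minimax theorem in the setting needed here. Lifting to mixed strategies to obtain a bilinear payoff, checking compactness of the strategy simplices in the weak-$*$ topology and continuity of $\bar F$ via the Lipschitz assumptions of Section~\ref{sec:infinitemix}, and then invoking Sion's theorem is exactly how one makes this rigorous for continuous parameter spaces. One small remark on attribution: von Neumann's original 1928 theorem was for finite games; the version you actually need (continuous $\U,\V$, infinite-dimensional mixed-strategy spaces) is properly due to later generalizations such as Glicksberg or Sion, which is indeed what you invoke. A second small point: you pass to the closures of $\U,\V$ to get compactness, which is fine because the parameter-Lipschitz assumptions let you extend $G_u,D_v$ continuously to the boundary, but it is worth saying explicitly that the resulting optimal $\mathcal{S}_u^{*},\mathcal{S}_v^{*}$ may then be supported on $\overline{\U},\overline{\V}$ rather than $\U,\V$ themselves; the paper is silent on whether $\U,\V$ are closed, so this is a harmless technicality rather than a gap.
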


Note that this equilibrium involves both parties announcing their strategies $\S_u, \S_v$ at the start, such that neither will have any incentive to change their strategy after studying the opponent's strategy. The payoff is generated by the generator  first sample $u\sim \S_u, h\sim \D_h$, and then generate an example $x = G_u(h)$. Therefore, the mixed generator is just a linear mixture of generators. A mixture of discriminators is more complicated because the objective function need not be linear in the discriminator. However in the case of our interest, the generator wins and even a mixture of discriminators cannot effectively distinguish between generated and real distribution. Therefore we do not consider a mixture of discriminators here.
%
%

Of course, this equilibrium involving an infinite mixture makes little sense in practice. We show that (as is folklore in game theory~\citep{lipton1994simple}) that we can {\em approximate} this min-max solution with mixture of finitely many generators and discriminators. More precisely we define $\epsilon$-approximate equilibrium:

\begin{definition}
A pair of mixed strategies $(\S_u,\S_v)$ is an $\epsilon$-approximate equilibrium, if for some value $V$
\begin{align*}
\forall v\in \mathcal{V}, &\quad \E_{u\sim\S_u}[F(u,v)] \le V+\epsilon;\\
\forall u\in \mathcal{U}, &\quad \E_{v\sim\S_v}[F(u,v)] \ge V-\epsilon.
\end{align*}
If the strategies $\S_u,\S_v$ are pure strategies, then this pair is called an $\epsilon$-approximate pure equilibrium.
\end{definition}

Suppose $\phi$ is $L_{\phi}$-Lipschitz and bounded in $[-\Delta,\Delta]$, the generator and discriminators are $L$-Lipschitz with respect to the parameters and $L'$-Lipschitz with respect to inputs, in this setting we can formalize the above Informal Theorem as follows:

\begin{theorem}\label{thm:mixedequilibrium}
In the settings above, if the generator can approximate any point mass\footnote{For all points $x$ and any $\epsilon>0$, there is a generator such that $\E_{h\sim\D_h}[\|G(h)-x\|] \le \epsilon$.}, there is a universal constant $C>0$ such that for any $\epsilon$, there exists $T = \frac{C\Delta^2 p\log (L L' L_{\phi}\cdot p/\epsilon)}{\epsilon^2}$ generators $G_{u_1},\ldots G_{u_T}$. 
Let $\S_u$ be a uniform distribution on $u_i$, and $D$ is a discriminator that outputs only $1/2$, 
then $(\S_u,D)$ is an $\epsilon$-approximate equilibrium. 
\end{theorem}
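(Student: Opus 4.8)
The plan is to set $V = 2\phi(1/2)$ and take $\S_v$ to be the point mass on the constant discriminator $D\equiv 1/2$, then verify the two inequalities in the definition of $\epsilon$-approximate equilibrium. The generator-side inequality is essentially free: for every $u$ we have $F(u,D) = \E_{x\sim\D_{real}}[\phi(1/2)] + \E_{x\sim\cD_{G_u}}[\phi(1/2)] = 2\phi(1/2) = V$, so $\E_{v\sim\S_v}[F(u,v)] = V \ge V-\epsilon$ holds trivially for all $u$. Hence all the real work is in the discriminator-side inequality: I must produce $T$ generators $u_1,\dots,u_T$ with $\tfrac1T\sum_{i=1}^T F(u_i,v) \le V+\epsilon$ \emph{simultaneously for every} $v\in\V$.

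First I would build an infinite mixed generator strategy $\S_u^\ast$ that forces the value down to $V$ against every discriminator. Using the hypothesis that the generator can approximate any point mass, I define $\S_u^\ast$ by the sampling rule: draw $x\sim\D_{real}$ and pick a generator $G_{u(x)}$ with $\E_h[\|G_{u(x)}(h)-x\|]\le\delta$. For a fixed $v$, since $D_v$ is $L'$-Lipschitz in its input and $\phi$ is $L_\phi$-Lipschitz, each such generator contributes $\E_h[\phi(1-D_v(G_{u(x)}(h)))] \le \phi(1-D_v(x)) + L_\phi L'\delta$, so
\[\E_{u\sim\S_u^\ast}[F(u,v)] \le \E_{x\sim\D_{real}}\big[\phi(D_v(x)) + \phi(1-D_v(x))\big] + L_\phi L'\delta \le 2\phi(1/2) + L_\phi L'\delta,\]
where the final step is Jensen's inequality applied to the concave $\phi$ at the midpoint of $D_v(x)$ and $1-D_v(x)$. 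Choosing $\delta = \epsilon/(2L_\phi L')$ gives $\E_{u\sim\S_u^\ast}[F(u,v)] \le V + \epsilon/2$ uniformly in $v$. This is the rigorous version of the ``infinite mixture wins'' thought experiment, and it is the role of the von Neumann bound of Theorem~\ref{thm:minmax}.

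Next I would sparsify this infinite mixture by sampling $u_1,\dots,u_T$ i.i.d.\ from $\S_u^\ast$ and applying concentration. Because $\phi\in[-\Delta,\Delta]$, each $F(u_i,v)$ lies in $[-2\Delta,2\Delta]$, so for a \emph{fixed} $v$ a Hoeffding bound controls the deviation of $\tfrac1T\sum_i F(u_i,v)$ from its mean $\le V+\epsilon/2$. To upgrade to all $v\in\V$ at once I would take a $\rho$-net of the parameter ball $\V\subset\R^p$: since $D_v$ is $L$-Lipschitz in $v$ and $\phi$ is $L_\phi$-Lipschitz, the empirical average is $2LL_\phi$-Lipschitz in $v$, so $\rho\approx\epsilon/(LL_\phi)$ transfers the bound from net points to all of $\V$ at a cost of another $\epsilon/4$. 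The net has cardinality $\exp(O(p\log(LL_\phi/\epsilon)))$; forcing the per-point failure probability below its inverse and solving for $T$ reproduces the stated $T=\Theta(\Delta^2 p\log(LL'L_\phi p/\epsilon)/\epsilon^2)$, with the three Lipschitz constants entering respectively through the covering of $\V$ ($L,L_\phi$) and the point-mass bias ($L',L_\phi$). A union bound then gives, with probability at least $1-\exp(-p)$ (so certainly positive, which is all the probabilistic-method existence argument needs), that the sampled uniform mixture $\S_u$ satisfies $\tfrac1T\sum_i F(u_i,v)\le V+\epsilon$ for every $v$.

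The step I expect to be the main obstacle is this last uniform-over-$v$ control: fitting the net resolution $\rho$, the net cardinality, and the Hoeffding sample size $T$ together so that the union bound closes and yields exactly the advertised $\epsilon^{-2}p\log(\cdots)$ dependence. By contrast the concavity/infinite-mixture step and the trivial generator-side inequality are routine. This sparsification is precisely the folklore argument of \citep{lipton1994simple} for approximating a min-max equilibrium by a sparse mixture, specialized here to a bounded Lipschitz payoff over a parametric strategy space.
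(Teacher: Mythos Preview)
Your proposal is correct and follows essentially the same route as the paper: establish that the value is $2\phi(1/2)$ via the point-mass mixture and concavity of $\phi$, then sparsify the infinite mixture by i.i.d.\ sampling plus a Hoeffding/$\epsilon$-net/union-bound argument over $\V$. The one cosmetic difference is that the paper first invokes von Neumann's minimax theorem to obtain an optimal mixed strategy $\S'_u$ and samples from \emph{that}, whereas you sample directly from the explicitly constructed point-mass mixture $\S_u^\ast$; your version is slightly more self-contained (no appeal to Theorem~\ref{thm:minmax} is actually needed), while the paper's version would also work if the ``generator can approximate any point mass'' hypothesis were replaced by any other argument pinning the value at $2\phi(1/2)$.
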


The proof uses a standard probabilistic argument and epsilon net argument to show that if we sample $T$ generators and discriminators from infinite mixture, they form an approximate equilibrium with high probability. For the second part, we use the fact that the generator can approximate any point mass, so an infinite mixture of generators can approximate the real distribution $\D_{real}$ to win. Therefore indeed a mixture of $\tilde{O}(p)$ generators can achieve an $\epsilon$-approximate equilibrium.

Note that this theorem works for a wide class of measuring functions $\phi$ (as long as $\phi$ is concave). The generator always wins, and the discriminator's (near) optimal strategy corresponds to random guessing (output a constant $1/2$).

\subsection{Achieving Pure Equilibrium}


Now we give a construction to augment the network structure, and achieve an approximate pure equilibrium for the GAN game for generator nets of size $\tilde{O}(p^2)$. This should be interpreted as: if deep nets of size $p$ are capable of generating any point mass, then the GAN game for the generator neural network of size $\tilde{O}(p^2)$ has an approximate equilibrium in which the generator wins. (The theorem is stated for RELU gates but also holds for standard activations  such as sigmoid.)

\begin{theorem}\label{thm:pureequilibrium}
Suppose the generator and discriminator are both $k$-layer neural networks ($k\ge 2$) with $p$ parameters, and the last layer uses ReLU activation function. In the setting of Theorem~\ref{thm:mixedequilibrium}, there exists $k+1$-layer neural networks of generators $G$ and discriminator $D$ with $O\left(\frac{\Delta^2 p^2\log (L L' L_{\phi}\cdot p/\epsilon)}{\epsilon^2}\right)$ parameters, such that there exists an $\epsilon$-approximate pure equilibrium with value $2\phi(1/2)$.
\end{theorem}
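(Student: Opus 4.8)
The plan is to realize the uniform mixture of generators from Theorem~\ref{thm:mixedequilibrium} as a \emph{single} generator net that is only one layer deeper, and to take the discriminator to be the constant function $D \equiv 1/2$; the pair $(G,D)$ will then be the desired approximate pure equilibrium. First I would dispose of the value and the generator-side inequality, which are essentially free. If $D \equiv 1/2$, then for every parameter $u'$ we have $F(u',D) = \E_{x\sim\D_{real}}[\phi(1/2)] + \E_{x\sim\cD_{G_{u'}}}[\phi(1/2)] = 2\phi(1/2)$, independent of $u'$, so the value is exactly $V = 2\phi(1/2)$ and $F(u',D) \ge V-\epsilon$ holds with equality for all $u'$. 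Hence the only real content is the other direction: building a single net $G$ for which no discriminator gains more than $\epsilon$, i.e. $F(G,v) \le V+\epsilon$ for all $v\in\V$.

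For that I would invoke Theorem~\ref{thm:mixedequilibrium} to obtain $T = \tilde{O}(\Delta^2 p/\epsilon^2)$ generators $G_{u_1},\dots,G_{u_T}$ whose uniform mixture $\S_u$ already satisfies $\E_{u\sim\S_u}[F(u,v)] \le V+\epsilon$ for all $v$; since $\E_{u\sim\S_u}[F(u,v)] = \E_{x\sim\D_{real}}[\phi(D_v(x))] + \E_{x\sim\frac1T\sum_i\cD_{G_{u_i}}}[\phi(1-D_v(x))]$, it suffices to construct one net whose output distribution is within $O(\epsilon)$ of the mixture $\frac1T\sum_i \cD_{G_{u_i}}$, because $\phi$ is bounded by $\Delta$ and so $F(\cdot,v)$ changes by at most $O(\Delta\rho)$ under a change of the generated distribution on an event of probability $\rho$. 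The construction stacks the $T$ sub-generators in parallel on a shared Gaussian input $h$ — multiplying the width by $T$ while keeping the depth at $k$, for $O(Tp) = O(\Delta^2 p^2\log(LL'L_\phi p/\epsilon)/\epsilon^2)$ parameters — and then appends one selection-and-combination layer. Using extra independent Gaussian input coordinates $z$, a small selector subnetwork (run in parallel and padded to depth $k$ via identity maps of the form $\mathrm{ReLU}(y+M)-M$) computes signals $t_1,\dots,t_T$ that are $\approx 0$ for exactly one uniformly chosen index $i$ and $\le -B$ otherwise; concretely one partitions a coordinate of $z$ into $T$ quantile intervals of probability $1/T$ and approximates each interval indicator by a piecewise-linear bump built from $O(1)$ ReLU gates.

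The crucial gadget is the final layer, which must output the \emph{selected} sample $G_{u_i}(h)$ without explicitly multiplying the selector by the sub-generator output (a product is not a ReLU operation). Here I would use the identity $\mathrm{ReLU}(x+t) - \mathrm{ReLU}(-x+t)$, which equals $x$ when $t=0$ and equals $0$ when $t\le -B$ provided $|x|\le B$: summing $\sum_i\big(\mathrm{ReLU}(G_{u_i}(h)+t_i) - \mathrm{ReLU}(-G_{u_i}(h)+t_i)\big)$ coordinatewise yields exactly $G_{u_i}(h)$ for the selected $i$ and annihilates the rest. This is a single affine-plus-ReLU map of the layer-$k$ activations $(G_{u_i}(h), t_i)$, so it adds exactly one layer and produces the promised $(k+1)$-layer net; the selector and combination cost only $O(T(d+1))$ further parameters, which is subdominant to $Tp$.

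The two error sources I expect to be the main obstacle are (i) the Gaussian-tail event $\|G_{u_i}(h)\| > B$, where the gadget clips the output, and (ii) the bump transition regions, where more than one $t_i$ is near $0$ and the selection is ambiguous. Both have small probability: taking $B$ polynomial in the Lipschitz constants and in $\log(1/\epsilon)$ controls the tails through sub-Gaussian concentration of $\|h\|$ together with Lipschitzness of each $G_{u_i}$, and taking the bump width $\delta$ so that the total transition mass $O(T\delta)$ is $o(\epsilon/\Delta)$ controls the selector. Since each bad event of probability $\rho$ perturbs $F(G,v)$ by at most $O(\Delta\rho)$ uniformly over $v$, both contributions fit inside an $O(\epsilon)$ budget, so $\cD_G$ is within $O(\epsilon)$ of the mixture and $F(G,v)\le V+O(\epsilon)$ for all $v$. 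Rescaling $\epsilon$ by a constant then makes $(G,\, D\equiv 1/2)$ the claimed $\epsilon$-approximate pure equilibrium with value $2\phi(1/2)$.
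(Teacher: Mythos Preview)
Your proposal is correct and follows essentially the same approach as the paper: fold the uniform mixture from Theorem~\ref{thm:mixedequilibrium} into a single $(k{+}1)$-layer generator via a ReLU-based multi-way selector driven by an extra Gaussian coordinate, pair it with the constant discriminator $D\equiv 1/2$, and bound the payoff perturbation by the total-variation error of the folding. The only difference is the gating gadget: the paper injects the large negative bias directly into the pre-activation of the \emph{existing} last ReLU layer (this is precisely where the hypothesis that the last layer uses ReLU is invoked), whereas your identity $\mathrm{ReLU}(x+t)-\mathrm{ReLU}(-x+t)$ adds a fresh layer; both accomplish the same selection, and your explicit accounting for the Gaussian-tail event $\|G_{u_i}(h)\|>B$ and the selector transition regions is, if anything, more careful than the paper's sketch.
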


To prove this theorem, we consider the mixture of generators as in Theorem~\ref{thm:mixedequilibrium}, and show how to fold the mixture into a  larger $k+1$-layer neural network. We sketch the idea; details are in the Appendix~\ref{sec:app:section4}. 
%

For mixture of generators, we construct a single neural network that approximately generates the mixture distribution using the gaussian input it has. To do that, we can pass the input $h$ through all the generators $G_{u_1}, G_{u_2},..., G_{u_T}$. We then show how to implement a ``multi-way selector'' that will select a uniformly random output from $\cD_{G_{u_i}}$ ($i\in [T]$). The selector involves a simple $2$-layer network that selects a number $i$ from $1$ to $T$ with the appropriate probability and \textquotedblleft disables\textquotedblright all the neural nets except the $i$th one by forwarding an appropriate large negative input.


{\em Remark:} In practice, GANs use highly structured deep nets, such as convolutional nets. Our current proof of existence of  pure equilibrium requires introducing less structured elements in the net, namely, the multiway selectors that implement the mixture within a single net.  It is left for future work whether pure equilibria exist for the original structured architectures.  In the meantime, in practice we recommend using, even for W-GAN, a mixture of structured nets for GAN training, and it seems to help in our experiments reported below.

\section{MIX+GANs}

Theorem~\ref{thm:mixedequilibrium} and Theorem~\ref{thm:pureequilibrium} show that using a mixture of (not too many) generators and discriminators guarantees existence of approximate equilibrium. This suggests that using a mixture may lead to more  stable training. Our experiments correspond to an older version of this paper, and they are done using a mixture for both generator and discriminators.

Of course, it is impractical to use very large mixtures, so we propose {\sc mix + gan}: use a mixture of $T$ components, where $T$ is as large as allowed by size of GPU memory (usually $T \leq 5$). Namely, train a mixture of $T$ generators $\{G_{u_i}, i\in [T]\}$ and $T$ discriminators $\{D_{v_i}, i\in [T]\}$) which share the same network architecture but have their own trainable parameters. Maintaining a mixture means of course maintaining a  weight $w_{u_i}$ for the generator $G_{u_i}$ which corresponds to the probability of selecting the output of $G_{u_i}$. These weights are also updated via backpropagation.  This heuristic can be combined with existing
methods like {\sc dcgan, w-gan} etc., giving us new training methods {\sc mix+dcgan, mix+w-gan} etc.

We use exponentiated gradient~\citep{kivinen1997exponentiated}: store the log-probabilities $\{\alpha_{u_i}, i\in[T]\}$, and then obtain the weights by applying soft-max function on them: 
$$w_{u_i}=\frac{e^{\alpha_{u_i}}}{\sum_{k=1}^T e^{\alpha_{u_k}}},~~~i\in[T]$$

Note that our algorithm is maintaining weights on different generators and discriminators. This is very different from the idea of {\em boosting} where weights are maintained on samples. AdaGAN~\citep{tolstikhin2017adagan} uses ideas similar to boosting and maintains weights on training examples.

Given payoff function $F$, training  {\sc mix + gan} boils down to optimizing:
\begin{align*}
& \min_{\{u_i\},\{\alpha_{u_i}\}}\max_{\{v_j\},\{\alpha_{v_j}\}}\E_{i,j\in[T]}F(u_i,v_j) \\
& = \min_{\{u_i\},\{\alpha_{u_i}\}}\max_{\{v_j\},\{\alpha_{v_j}\}}\sum_{i,j\in[T]}w_{u_i}w_{v_j}F(u_i,v_j).
\end{align*}
Here the payoff function is the same as Equation \eqref{eq:payoff}. We use both measuring functions $\phi(x) = \log x$ (for original GAN) and $\phi(x)=x$ (for WassersteinGAN).
In our experiments we alternatively update generators' and discriminators' parameters as well as their corresponding log-probabilities using ADAM~\citep{2015adam}, with learning rate $lr=0.0001$.

Empirically, it is observed that some components of the mixture tend to collapse and their weights diminish during the training. To encourage full use of the mixture capacity, we add to the training objective an entropy regularizer term that discourages the weights being too far away from uniform:
\begin{align*}
& R_{ent}(\{w_{u_i}\}, \{w_{v_i}\}) = -\frac{1}{T}\sum_{i=1}^T(\log(w_{u_i}) + \log(w_{v_i}))
\end{align*}

\section{Experiments}  \label{sec:experiments}

\begin{figure}[h]
	\centering
\includegraphics[width=0.7\columnwidth]{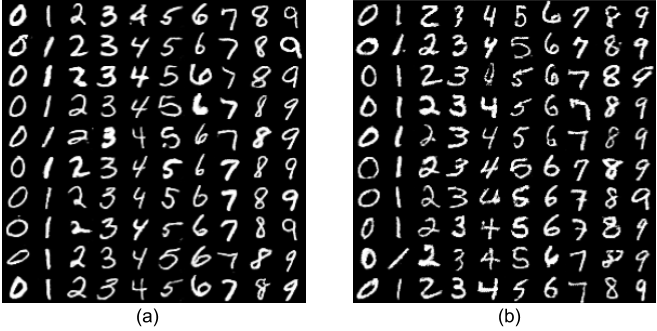}
	\caption{\textbf{MNIST Samples}. Digits generated from (a) MIX+DCGAN and (b) DCGAN. }
	\label{fig:mnist_results}
\end{figure}
\begin{figure}[h]
	\centering
	\includegraphics[width=0.7\columnwidth]{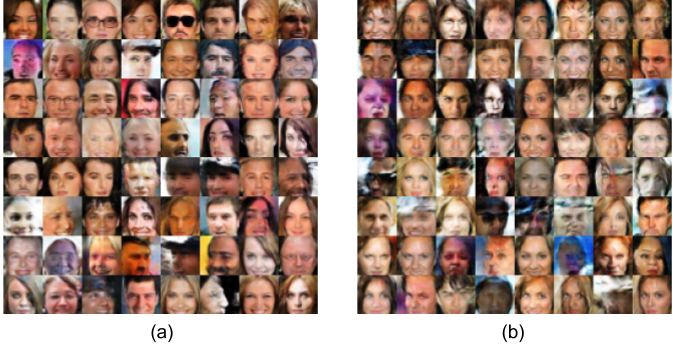}
	\caption{\textbf{CelebA Samples}. Faces generated from (a) \MDC and (b) \DCGAN.}
	\label{fig:celebA_results}
\end{figure}

\begin{table}[h]
	\caption{\textbf{Inception Scores on CIFAR-10.} Mixture of DCGANs achieves higher score than any single-component DCGAN does. All models except for WassersteinGAN variants are trained with labels.}
	\label{tab:inception_score}
	\begin{center}
		\begin{small}
			\begin{tabular}{lr}
				\hline
				Method & Score \\
				\hline
				SteinGAN~\citep{Dilin2016Stein} & 6.35\\
				Improved GAN~\citep{Salimans2016ImprovedGANs} & 8.09$\pm$0.07\\
				AC-GAN~\citep{odena2016conditional} &  8.25 $\pm$ 0.07\\
				S-GAN (best variant in~\citep{xh2016SGAN}) & 8.59$\pm$ 0.12\\
				DCGAN (as reported in~\citet{Dilin2016Stein}) & 6.58\\
				DCGAN (best variant in~\citet{xh2016SGAN}) & 7.16$\pm$0.10\\
				DCGAN (5x size) & 7.34$\pm$0.07\\
							MIX+DCGAN (Ours, with $5$ components) & 7.72$\pm$0.09 \\
				\hline
								Wasserstein GAN & 3.82$\pm$0.06\\
				MIX+WassersteinGAN (Ours, with $5$ components) & 4.04$\pm$0.07\\
				\hline
								Real data & 11.24$\pm$0.12
			\end{tabular}
		\end{small}
	\end{center}
\end{table}

\begin{figure}[h]
	\centering
	\includegraphics[width=0.7\columnwidth]{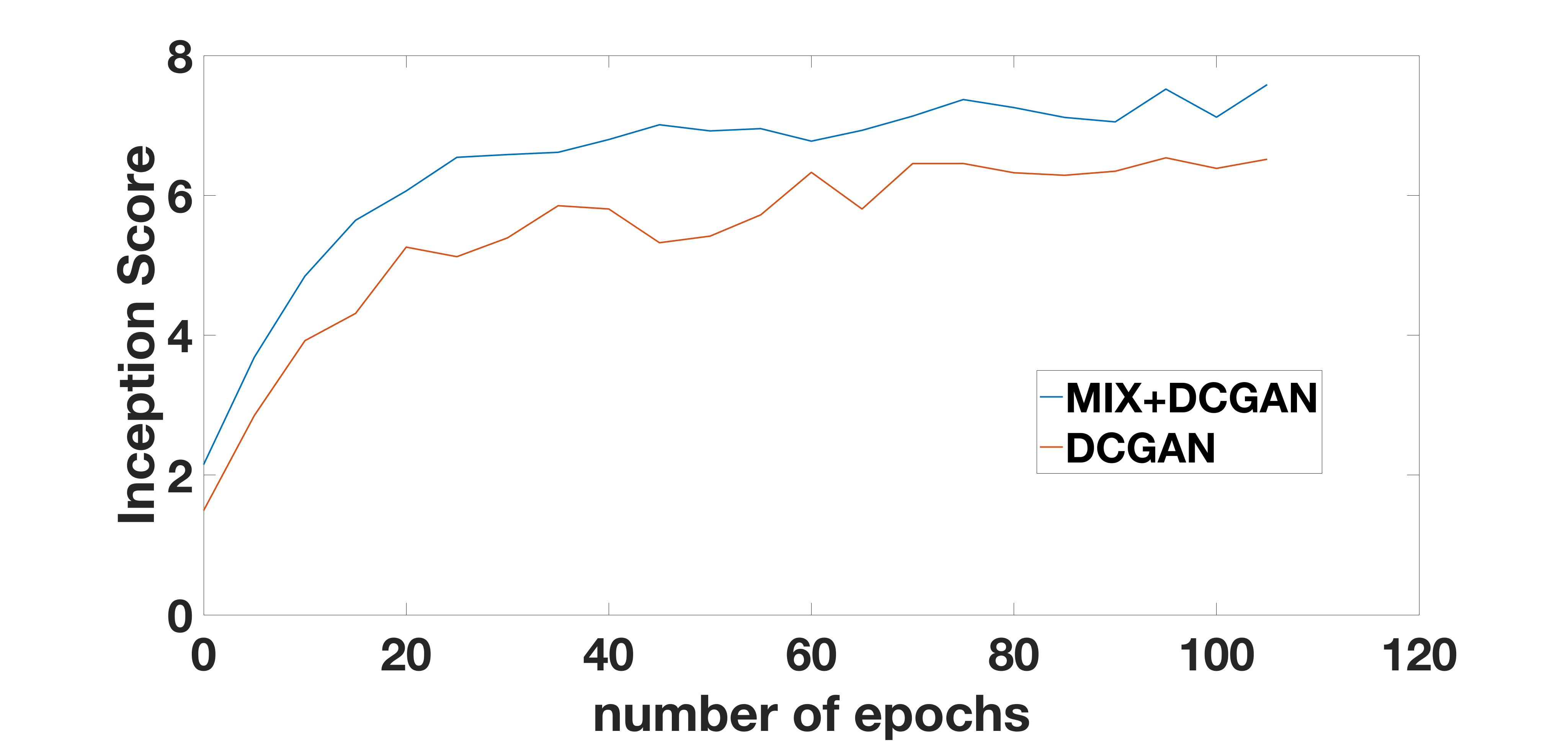}
	\caption{\textbf{MIX+DCGAN v.s. DCGAN Training Curve (Inception Score)}. MIX+DCGAN is consistently higher than DCGAN.}
	\label{fig:mix_inception_score_curve}
	\includegraphics[width=0.7\columnwidth]{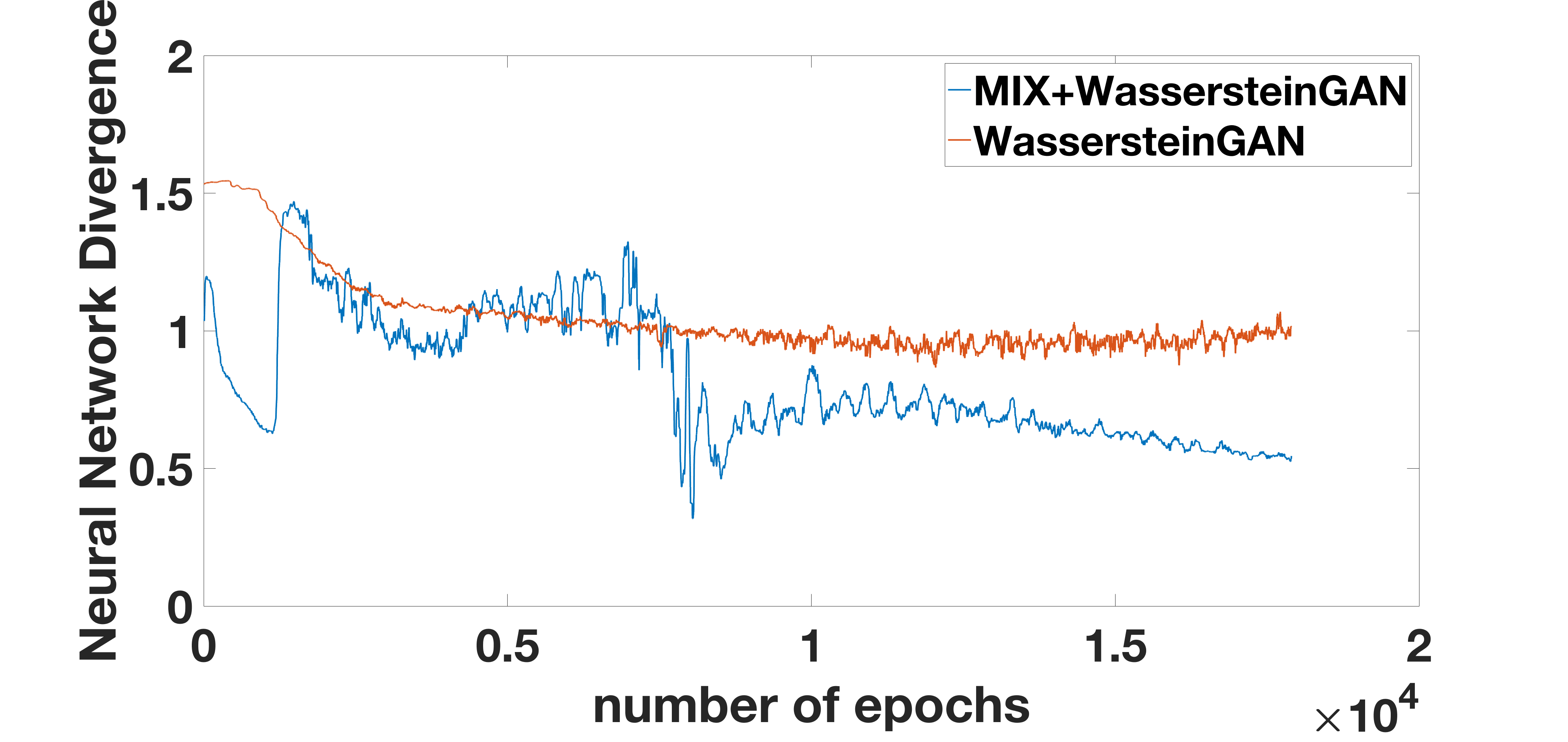}
	\caption{\textbf{MIX+WassersteinGAN v.s. WassersteinGAN Training Curve (Wasserstein Objective)}.  MIX+WassersteinGAN  is better towards the end but loss drops less smoothly, which needs further investigation.}
	\label{fig:inception_score_wgan_curve}
\end{figure}
In this section, we  first explore  the qualitative benefits of our method on image generation tasks: MNIST dataset~\citep{lecun1998mnist} of hand-written digits and the CelebA~\citep{liu2015faceattributes} dataset of human faces. Then for more quantitative evaluation we use the CIFAR-10 dataset~\citep{alex09} and use the {\em Inception Score} introduced in~\cite{Salimans2016ImprovedGANs}.  MNIST contains 60,000 labeled 28$\times$28-sized images of hand-written digits, CelebA contains over 200K 108$\times$108-sized images of human faces (we crop the center 64$\times$64 pixels for our experiments), and CIFAR-10 has 60,000 labeled 32$\times$32-sized RGB natural images which fall into 10 categories. 

To reinforce the point that this technique works out of the box, no extensive hyper-parameter search or tuning is necessary. Please refer to our code for experimental setup.\footnote{Related code is public online at \url{https://github.com/PrincetonML/MIX-plus-GANs.git}}

\subsection{Qualitative Results}

The DCGAN architecture~\citep{soumith2016DCGAN} uses deep convolutional nets as generators and discriminators. We trained {\sc mix + dcgan} on MNIST and CelebA using the authors' code as a black box, and compared visual qualities of generated images vs DCGAN.

The DCGAN architecture~\citep{soumith2016DCGAN} uses deep convolutional nets as generators and discriminators. We trained \MDC on MNIST and CelebA using the authors' code as a black box, and compared visual qualities of generated images to those by \DCGAN.

Results on MNIST is shown in Figure~\ref{fig:mnist_results}. In this experiment, the baseline \DCGAN consists of a pair of a generator and a discriminator, which are 5-layer deconvoluitonal neural networks, and are conditioned on image labels. Our \MDC model consists of a mixture of such DCGANs so that it has $3$ generators and $3$ discriminators. 
We observe that our method produces somewhat cleaner digits than the baseline (note the fuzziness in the latter).

Results on CelebA dataset are also in Figure~\ref{fig:celebA_results}, using the same architecture as for MNIST, except the models are not conditioned on image labels anymore. Again, our method generates more faithful and more diverse samples than the baseline. Note that one may need to zoom in to fully perceive the difference, since both the two datasets are rather easy for \DCGAN.

In Appendix~\ref{sec:app:components}, we also show generated digits and faces from each components of \MDC.

\subsection{Quantitative Results} 

Now we turn to quantitative measurement using Inception Score~\citep{Salimans2016ImprovedGANs}. Our method is applied to \DCGAN and \WGAN~\cite{arjovsky2017wasserstein}, and throughout, mixtures of $5$ generators and $5$ discriminators are used. 
At first sight the comparison \DCGAN v.s.\ \MDC seems unfair because the latter uses $5$ times the capacity of the former, with corresponding penalty in running time per epoch. To address this, we also compare our method with larger versions of \DCGAN with roughly the same number of parameters, and we found the former is consistently better than the later, as detailed below. 

To construct \MDC,  we build on top of the DCGAN trained with losses proposed by~\citet{xh2016SGAN}, which is the best variant so far without improved training techniques. The same hyper-parameters are used for fair comparison. See~\cite{xh2016SGAN} for more details. Similarly, for the \MW, the base GAN is identical to that proposed by~\citet{arjovsky2017wasserstein} using their hyper-parameter scheme. 

For a quantitative comparison, inception score is calculated for each model, using 50,000 freshly generated samples that are not used in training. To sample a single image from our MIX+ models, we first select a generator from the mixture according to their assigned weights $\{w_{u_i}\}$, and then draw a sample from the selected generator. 

Table~\ref{tab:inception_score} shows the results on the CIFAR-10 dataset.
We find that, simply by applying our method to the baseline models, our MIX+ models achieve 7.72 v.s.\ 7.16 on \DCGAN, and 4.04 v.s.\ 3.82 on \WGAN. To confirm that the superiority of MIX+ models is not solely due to more parameters, we also tested a \DCGAN model with 5 times many parameters (roughly the same number of parameters as a 5-component \MDC), which is tuned using a grid search over 27 sets of hyper-parameters (learning rates, dropout rates, and regularization weights). It gets only 7.34 (labeled as "5x size" in Table~\ref{tab:inception_score}), which is lower than that of a \MDC. It is unclear how to apply MIX+ to S-GANs. We tried mixtures of the upper and bottom generators separately, resulting in worse scores somehow. We leave that for future exploration.

Figure~\ref{fig:mix_inception_score_curve} shows how Inception Scores of MIX+DCGAN v.s. DCGAN evolve during training. MIX+DCGAN outperforms DCGAN throughout the entire training process, showing that it makes effective use of the additional capacity.

\citet{arjovsky2017wasserstein} shows that (approximated) Wasserstein loss, which is the neural network divergence by our definition, is meaningful because it correlates well with visual quality of generated samples. Figure~\ref{fig:inception_score_wgan_curve} shows the training dynamics of neural network divergence of MIX+WassersteinGAN v.s. WassersteinGAN, which strongly indicates that MIX+WassersteinGAN is capable of achieving a much lower divergence as well as of improving the visual quality of generated samples.

\section{Conclusions}
The notion of generalization for GANs has been clarified by introducing a new notion of distance between distributions, the neural net distance. (Whereas popular distances such as Wasserstein and JS may not generalize.) Assuming the visual cortex also is a deep net (or some network of moderate capacity) generalization with respect to this metric is in principle sufficient to make the final samples look realistic to humans, even if the GAN doesn't actually learn the true distribution.

One issue raised by our analysis is that the current GANs objectives cannot even enforce that the synthetic distribution has high diversity (Section~\ref{sec:diversity}). This is empirically verified in a follow-up work\citep{arora2017gans}. Furthermore the issue cannot be fixed by simply providing the discriminator with more training examples. Possibly some other change to the GANs setup are needed. 

The paper also made progress another unexplained issue about GANs, by showing that a pure approximate equilibrium exists for a certain natural training objective (Wasserstein) and in which the generator wins the game. No assumption about the target distribution $\cDr$ is needed.

Suspecting that a pure equilibrium may not exist for all objectives,  we recommend in practice our \MG protocol using a small mixture of discriminators and generators. Our experiments show it improves the quality of several existing GAN training methods.

Finally, existence of an equilibrium does not imply that a simple algorithm (in this case, backpropagation) would find it easily. Understanding convergence remains widely open.

\section*{Acknowledgements} 

This paper was done in part while the authors were hosted by Simons Institute. We thank Moritz Hardt, Kunal Talwar, Luca Trevisan, Eric Price, and the referees for useful comments. This research was supported by NSF, Office of Naval Research, and the Simons Foundation.

\clearpage

\bibliographystyle{plainnat}
\bibliography{gan}

\clearpage
\appendix

\section{Generated Samples from Components of \MDC}\label{sec:app:components}

In Figure~\ref{fig:mnist_components} and Figure~\ref{fig:celeba_components} we showed generated digits and face from different components of \MDC. 
\begin{figure}[h]
	\centering
	\includegraphics[width=0.3\columnwidth]{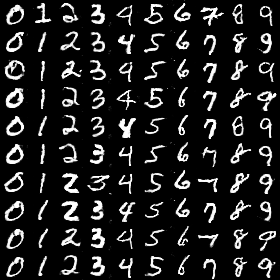}
	\includegraphics[width=0.3\columnwidth]{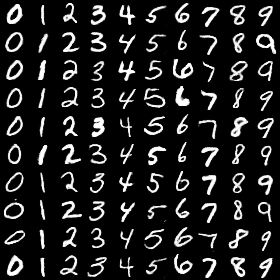}
	\includegraphics[width=0.3\columnwidth]{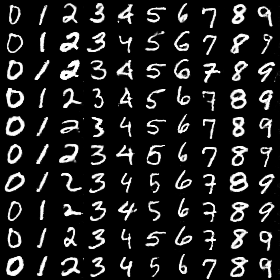}
	\caption{\textbf{MNIST Samples}. Digits generated from each of the 3 components of \MDC}
	\label{fig:mnist_components}
\end{figure}
\begin{figure}[h]
	\centering
	\includegraphics[width=0.3\columnwidth]{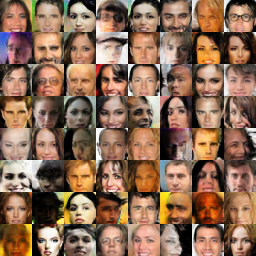}
	\includegraphics[width=0.3\columnwidth]{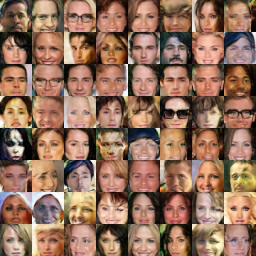}
	\includegraphics[width=0.3\columnwidth]{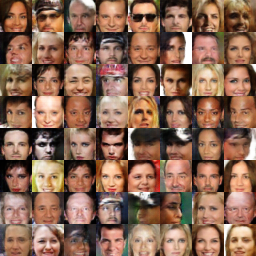}
	\caption{\textbf{MNIST Samples}. Faces generated from each of the 3 components of \MDC}
	\label{fig:celeba_components}
\end{figure}
\section{Omitted Proofs}

In this section we give detailed proofs for the theorems in the main document.

\subsection{Omitted Proofs for Section~\ref{sec:generalize}}
\label{sec:app:generalize}
We first show that JS divergence and Wasserstein distances can lead to overfitting.

\begin{lemma}[Lemma~\ref{lem:warmup} restated]
	Let $\mu$ be uniform Gaussian distributions $\N(0,\frac{1}{d}I)$ and $\hat{\mu}$ be an empirical versions of $\mu$ with $m$ examples. Then we have 	\begin{align*}
	&d_{JS}(\mu,\hat{\mu}) = \log 2\\
	& d_{W}(\mu,\hat{\mu}) \ge 1.1
	\end{align*}
\end{lemma}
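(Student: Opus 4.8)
The plan is to prove the two claims separately, both exploiting the fact that $\hat{\mu}$ is supported on the finite (hence Lebesgue-null) set $\{x_1,\dots,x_m\}$ while $\mu=\N(0,\frac1d I)$ is absolutely continuous; the statement should be read as holding with high probability over the draw of the $m$ samples defining $\hat\mu$.

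For the JS divergence, I would first note that since $\hat\mu$ lives on a measure-zero set and $\mu$ has a density, the two measures are mutually singular. The computation is then purely general: if $\mu\perp\nu$, choose a measurable set $A$ with $\mu(A)=1$ and $\nu(A)=0$. On $A$ the averaged measure $\frac{\mu+\nu}{2}$ coincides with $\frac12\mu$, so the Radon--Nikodym derivative $\frac{d\mu}{d((\mu+\nu)/2)}$ equals $2$ on $A$; since $\mu$ is concentrated on $A$, this gives $KL(\mu\|\frac{\mu+\nu}{2})=\int_A \log 2\, d\mu=\log 2$, and symmetrically $KL(\nu\|\frac{\mu+\nu}{2})=\log 2$ using that $\nu$ is concentrated on $A^c$. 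Averaging the two yields $d_{JS}(\mu,\hat\mu)=\log 2$, which is precisely the maximal possible value of JS divergence, reflecting total separation.

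For the Wasserstein distance, I would use the dual (Kantorovich--Rubinstein) formulation given in the excerpt and exhibit a single good $1$-Lipschitz test function, namely $D(x)=\min_{i\in[m]}\|x-x_i\|$, the distance to the nearest sample. As a minimum of $1$-Lipschitz functions it is $1$-Lipschitz, and $D(x_j)=0$ for every sample so that $\E_{x\sim\hat\mu}[D]=0$; it therefore suffices to lower bound $\E_{y\sim\mu}[\min_i\|y-x_i\|]$. The core is a high-dimensional concentration estimate exploiting near-orthogonality of independent Gaussians. For a fresh $y\sim\mu$ and a fixed independent $x_i\sim\mu$, I expand $\|y-x_i\|^2=\|y\|^2+\|x_i\|^2-2\langle y,x_i\rangle$ and control each term: $\|y\|^2$ and $\|x_i\|^2$ are distributed as $\frac1d\chi^2_d$ and concentrate around $1$ with $O(1/\sqrt d)$ fluctuations, while $\langle y,x_i\rangle$, conditioned on $y$, is $\N(0,\|y\|^2/d)$ and hence $O(1/\sqrt d)$. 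Thus $\|y-x_i\|^2\approx 2$, and the event $\|y-x_i\|<1.1$ requires a constant-size deviation of $\|y-x_i\|^2$ below $2$, which has probability $\exp(-\Omega(d))$ by standard Gaussian and chi-squared tail bounds. A union bound over the $m=\mathrm{poly}(d)$ samples gives $\Pr[\min_i\|y-x_i\|<1.1]\le m\exp(-\Omega(d))=o(1)$, whence $\E_{y\sim\mu}[\min_i\|y-x_i\|]\ge 1.1\,(1-o(1))\ge 1.1$ for $d$ large, with ample room since the true nearest-neighbor distance is close to $\sqrt2\approx 1.414$.

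The main obstacle is making the Wasserstein step uniform and rigorous: I must control the minimum over all $m$ samples simultaneously, over the randomness of both the training set and the fresh point $y$, which is exactly where the union bound and the sub-exponential-in-$d$ growth of $m$ become essential. This is the concrete manifestation of the curse of dimensionality that forces the empirical and population Wasserstein (and JS) distances to diverge, establishing the overfitting consequences described after the lemma statement.
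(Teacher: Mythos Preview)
Your proposal is correct and mirrors the paper's argument: the JS claim is exactly the mutual-singularity computation (the paper just says ``$\mu$ is continuous, $\hat\mu$ is discrete, hence $d_{JS}=\log 2$''), and the Wasserstein claim rests on the same Gaussian concentration $\|y-x_i\|^2\approx 2$ plus a union bound over $m$ samples. The only cosmetic differences are that the paper invokes the primal earth-mover interpretation rather than your dual test function $D(x)=\min_i\|x-x_i\|$ (these yield the identical lower bound $\E_y[\min_i\|y-x_i\|]$), and the paper sets the threshold at $1.2$ so that $1.2\cdot(1-o(1))\ge 1.1$ goes through cleanly---your line $1.1\,(1-o(1))\ge 1.1$ is literally false, but as you note there is slack up to $\sqrt{2}$, so just pick a threshold strictly above $1.1$.
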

\begin{proof}For Jensen-Shannon divergence, observe that $\mu$ is a continuous distribution and $\hat{\mu}$ is discrete, therefore $d_{JS}(\mu,\hat{\mu}) = \log 2$. 

For Wasserstein distance, let $x_1,x_2,...,x_m$ be the empirical samples (fixed arbitrarily). For $y\sim \N(0,\frac{1}{d}I)$, by standard concentration and union bounds, we have
$$
\Pr[\forall i\in[m] \|y-x_i\| \ge 1.2] \ge 1-m\exp(-\Omega(d)) \ge 1-o(1).
$$
 Therefore, using the earth-mover interpretation of Wasserstein distance, we know $d_{W}(\mu,\hat{\mu}) \ge 1.2\Pr[\forall i\in[m] \|y-x_i\| \ge 1.2] \ge 1.1$.
\end{proof}

Next we consider sampling for both the generated distribution and the real distribution, and show that the JS divergence or Wasserstein distance do not generalize.

\begin{theorem}\label{thm:formal_generalize}
Let $\mu,\nu$ be uniform Gaussian distributions $\N(0,\frac{1}{d}I)$. Suppose $\hat{\mu},\hat{\nu}$ are empirical versions of $\mu,\nu$ with $m$ samples. Then with probability at least $1-m^2\exp(-\Omega(d))$ we have
\begin{align*}
d_{JS}(\mu,\nu) = 0, & d_{JS}(\hat{\mu},\hat{\nu}) = \log 2.\\
d_{W}(\mu,\nu) = 0, & d_{W}(\hat{\mu},\hat{\nu}) \ge 1.1.
\end{align*}
Further, let $\tilde{\mu},\tilde{\nu}$ be the convolution of $\hat{\mu},\hat{\nu}$ with a Gaussian distribution $N(0,\frac{\sigma^2}{d}I)$, as long as $\sigma < \frac{c}{\sqrt{\log m}}$ for small enough constant $c$, we have with probability at least $1-m^2\exp(-\Omega(d))$.
$$
d_{JS}(\tilde{\mu},\tilde{\nu}) > \log 2 - 1/m.
$$

\end{theorem}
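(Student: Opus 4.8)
The first four relations are immediate and I would dispose of them quickly: since $\mu=\nu$ we have $d_{JS}(\mu,\nu)=d_W(\mu,\nu)=0$; with probability one the continuous samples make $\hat\mu,\hat\nu$ discrete distributions with disjoint finite supports, so $d_{JS}(\hat\mu,\hat\nu)=\log 2$; and the lower bound $d_W(\hat\mu,\hat\nu)\ge 1.1$ follows from the same earth-mover argument as in Lemma~\ref{lem:warmup} once the samples are pairwise well separated. The real content is the last inequality about the smoothed distributions, and the plan is to show that, after conditioning on a high-probability separation event, $\tilde\mu$ and $\tilde\nu$ are mixtures of narrow Gaussian bumps whose supports barely overlap, so their JS divergence stays within $1/m$ of the maximal value $\log 2$. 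First I would record the geometric input: writing $x_1,\dots,x_m$ and $y_1,\dots,y_m$ for the centers of $\hat\mu,\hat\nu$, each difference $x_i-y_j$, $x_i-x_{i'}$, $y_j-y_{j'}$ is distributed as $\N(0,\frac2d I)$, so by concentration and a union bound over the $\binom{2m}{2}=O(m^2)$ pairs, all pairwise distances among the $2m$ centers exceed $1.2$ with probability at least $1-m^2\exp(-\Omega(d))$. I condition on this event for the rest of the argument.

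Let $g_\sigma$ denote the density of $\N(0,\frac{\sigma^2}{d}I)$, so that $p_{\tilde\mu}=\frac1m\sum_i g_\sigma(\cdot-x_i)$ and $p_{\tilde\nu}=\frac1m\sum_j g_\sigma(\cdot-y_j)$, with mixture $p_M=\frac12(p_{\tilde\mu}+p_{\tilde\nu})$ and posterior $\eta(x)=p_{\tilde\mu}(x)/(p_{\tilde\mu}(x)+p_{\tilde\nu}(x))$. A direct manipulation of the definition of JS divergence gives $d_{JS}(\tilde\mu,\tilde\nu)=\log 2-A$, where $A=\int p_M(x)\,H_b(\eta(x))\,dx$ and $H_b(p)=-p\log p-(1-p)\log(1-p)$ is the binary entropy. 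Since $A\ge 0$ this already re-proves $d_{JS}\le\log 2$, and it remains to show $A<1/m$.

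To bound $A$ I would split $\R^d$ into the union $S$ of radius-$0.3$ balls around the $2m$ centers and its complement $S^c$. For $x\in S$, say $\|x-x_{i_0}\|\le 0.3$: by the separation event every $y_j$ satisfies $\|x-y_j\|\ge 0.9$, so comparing the single dominant $\mu$-bump against all $\nu$-bumps gives $p_{\tilde\nu}(x)/p_{\tilde\mu}(x)\le m\exp\!\left(-\tfrac{d}{2\sigma^2}(0.9^2-0.3^2)\right)=:\tau$, whence $H_b(\eta(x))\le H_b(\tau)$. For $x\in S^c$ I simply use $H_b\le\log 2$ and control the mass: a draw from $M$ is some center plus $\xi\sim\N(0,\frac{\sigma^2}{d}I)$, and it lands in $S^c$ only if $\|\xi\|>0.3$, so $\int_{S^c}p_M\le\Pr[\|\xi\|>0.3]$. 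Combining, $A\le H_b(\tau)+\log 2\cdot\Pr[\|\xi\|>0.3]$.

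The choice $\sigma<c/\sqrt{\log m}$ then makes both terms $o(1/m)$: it forces $\tfrac{d}{\sigma^2}\ge \tfrac{d\log m}{c^2}$, so $\tau\le m^{1-\Omega(d/c^2)}$ and $H_b(\tau)\le 2\tau\log(1/\tau)\ll 1/m$, while the same bound turns $\Pr[\|\xi\|>0.3]$ into a $\chi^2_d$ deviation with threshold $\gg d$, hence $m^{-\Omega(d)}$. This yields $A<1/m$ and therefore $d_{JS}(\tilde\mu,\tilde\nu)=\log 2-A>\log 2-1/m$. I expect the main obstacle to be the far region $S^c$: there the posterior $\eta$ is completely uncontrolled, since in the ``valleys'' between bumps both densities are tiny and their ratio can be anything, so a naive bound on $\log(p_{\tilde\nu}/p_{\tilde\mu})$ fails. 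The reason to route the estimate through the bounded binary entropy $H_b\le\log 2$ is precisely so that this region contributes only through its negligible total mass. Getting the threshold radius right --- a constant fraction of the $\Theta(1)$ separation, small enough that the near-region overlap decays with margin to absorb the $m$-fold (bump-sum) and $m^2$-fold (separation union-bound) factors --- is the one place where the hypothesis $\sigma<c/\sqrt{\log m}$ is genuinely used.
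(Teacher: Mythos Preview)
Your proposal is correct and follows essentially the same route as the paper. Both proofs rewrite the JS divergence as $\log 2$ minus the expected binary entropy of the posterior $\eta(x)=p_{\tilde\mu}(x)/(p_{\tilde\mu}(x)+p_{\tilde\nu}(x))$ under the mixture, then split $\R^d$ into small balls around the $2m$ centers (radius $0.2$ in the paper, $0.3$ in your version) and their complement; in the balls the pairwise separation forces the density ratio to be tiny so $H_b(\eta)=o(1/m)$, and the complement carries negligible mass by a Gaussian tail bound. Your write-up is in fact a bit more explicit than the paper's about how the condition $\sigma<c/\sqrt{\log m}$ enters each of the two terms.
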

\begin{proof}
For the Jensen-Shannon divergence, we know with probability 1 the supports of $\hat{\mu},\hat{\nu}$ are disjoint, therefore $d_{JS}(\hat{\mu},\hat{\nu}) = 1$.

For Wasserstein distance, note that for two random Gaussian vectors $x,y \sim N(0,\frac{1}{d}I)$, their difference is also a Gaussian with expected square norm 2. Therefore we have
$$
\Pr[\|x-y\|^2 \le 2 - \epsilon] \le \exp(-\Omega(\epsilon^2d)).
$$

As a result, setting $\epsilon$ to be a fixed constant (0.1 suffices), with probability $1-m^2\exp(-\Omega(d))$, we can union bound over all the $m^2$ pairwise distances for points in support of $\hat{\mu}$ and support of $\hat{\nu}$. With high probability, the closest pair between $\hat{\mu}$ and $\hat{\nu}$ has distance at least 1, therefore the Wasserstein distance $d_W(\hat{\mu},\hat{\nu}) \ge 1.1$.

Finally we prove that even if we add noise to the two distributions, the JS divergence is still large. For distributions $\tilde{\mu},\tilde{\nu}$, let $\rho_1,\rho_2$ be their density functions. Let $g(x) = \rho_1(x)\log \frac{2\rho_1(x)}{\rho_1(x)+\rho_2(x)} + \rho_2(x)\log \frac{2\rho_2(x)}{\rho_1(x)+\rho_2(x)}$, we can rewrite the JS divergence as
$$
d_{JS}(\tilde{\mu},\tilde{\nu}) = \int \frac{1}{2}g(x) dx.
$$
Let $z_x$ be a Bernoulli variable with probability $\rho_1(x)/(\rho_1(x)+\rho_2(x))$ of being 1. Note that $g(x)=(\rho_1(x)+\rho_2(x))(\log 2 - H(z_x))$ where $H(z_x)$ is the entropy of $z_x$. Therefore $0 \le g(x)\le (\rho_1(x)+\rho_2(x))\log 2$.  Let $\mathcal{X}$ be the union of radius-0.2 balls near the $2m$ samples in $\hat{\mu}$ and $\hat{\nu}$. Since with high probability, all these samples have pairwise distance at least 1, by Gaussian density function we know (a) the balls do not intersect; (b) within each ball $\frac{\max\{d_1(x),d_2(x)\}}{\min\{d_1(x),d_2(x)\}} \ge m^2$; (c) the union of these balls take at least $1-1/2m$ fraction of the density in $(\hat{\mu}+\hat{\nu})/2$.

Therefore for every $x\in \X$, we know $H(z_x) \le o(1/m)$, therefore
\begin{align*}
d_{JS}(\tilde{\mu},\tilde{\nu}) & = \int \frac{1}{2}g(x) dx\\
& \ge\int_{x\in \X} \frac{1}{2}g(x) dx \\
& \ge\int_{x\in \X} (\rho_1(x)+\rho_2(x))(\log 2-o(1/m)) dx\\
& \ge \log 2 - 1/2m-o(1/m) \ge \log 2-1/m.
\end{align*}
\end{proof}

Next we prove the neural network distance does generalize, given enough samples. Let us first recall the settings here: we assume that the measuring function takes values in $[-\Delta,\Delta]$ is $L_\phi$-Lipschitz. Further, $\mathcal{F} = \{D_v, v\in \mathcal{V}\}$ is the class of discriminators that is $L$-Lipschitz with respect to the parameters $v$. As usual, we use $p$ to denote the number of parameters in $v$. 

\begin{theorem}
[Theorem~\ref{thm:nngeneralizesingle} restated]
	In the setting described in the previous paragraph, let $\mu,\nu$ be two distributions and $\hat{\mu},\hat{\nu}$ be empirical versions with at least $m$ samples each. There is a universal constant $c$ such that  when $m \ge \frac{cp\Delta^2\log (LL_{\phi}p/\epsilon)}{\epsilon^2}$,  we have with probability at least $1-\exp(-p)$ over the randomness of $\hat{\mu}$ and $\hat{\nu}$, 
	$$
	|d_{\mathcal{F},\phi}(\hat{\mu}, \hat{\nu}) - d_{\mathcal{F},\phi}(\mu, \nu)| \le \epsilon.
	$$
\end{theorem}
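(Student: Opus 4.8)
The plan is a uniform-convergence argument over the parameter space $\V \subset \R^p$, reducing the deviation of the supremum to a deviation of individual expectations that I can control by a Chernoff/Hoeffding bound together with a covering-number (epsilon-net) argument. Write $G(v) = \E_{x\sim\mu}[\phi(D_v(x))] + \E_{x\sim\nu}[\phi(1-D_v(x))] - 2\phi(1/2)$ and let $\hat G(v)$ be the same quantity with $\mu,\nu$ replaced by $\hat\mu,\hat\nu$, so that $d_{\mathcal{F},\phi}(\mu,\nu) = \sup_{v} G(v)$ and $d_{\mathcal{F},\phi}(\hat\mu,\hat\nu) = \sup_{v} \hat G(v)$. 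The first step is the elementary observation that $|\sup_v \hat G(v) - \sup_v G(v)| \le \sup_v |\hat G(v) - G(v)|$, so it suffices to bound $|\hat G(v) - G(v)|$ uniformly in $v$ by $\epsilon$. Splitting $\hat G(v) - G(v)$ into the $\mu$-part (involving $\phi(D_v(\cdot))$) and the $\nu$-part (involving $\phi(1-D_v(\cdot))$), I will bound each part by $\epsilon/2$.

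First I would fix a single $v$. Since $\phi$ takes values in $[-\Delta,\Delta]$, each summand $\phi(D_v(x_i))$ lies in an interval of width $2\Delta$, so Hoeffding's inequality gives $\Pr[\, |\E_{x\sim\hat\mu}[\phi(D_v(x))] - \E_{x\sim\mu}[\phi(D_v(x))]| \ge t \,] \le 2\exp(-mt^2/(2\Delta^2))$, and the same for the $\nu$-part. Next I would take a $\delta$-net $\mathcal{N}$ of $\V$; because $\V$ is contained in the unit ball of $\R^p$, its covering number satisfies $|\mathcal{N}| \le (3/\delta)^p$. A union bound over the (at most $2(3/\delta)^p$) events --- both sides, all net points --- shows that with probability at least $1 - 4(3/\delta)^p \exp(-mt^2/(2\Delta^2))$ every net point $v_j$ obeys the deviation bound $t$ on both parts.

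To pass from the net to an arbitrary $v\in\V$, I would use the two Lipschitz assumptions in sequence: for the nearest net point $v_j$ with $\|v-v_j\|\le\delta$, parameter-Lipschitzness of $D$ gives $|D_v(x) - D_{v_j}(x)| \le L\delta$ for every $x$, and then $L_{\phi}$-Lipschitzness of $\phi$ gives $|\phi(D_v(x)) - \phi(D_{v_j}(x))| \le L L_{\phi} \delta$ (and identically for $\phi(1-D_v(x))$). Hence each part of $|\hat G(v) - G(v)|$ exceeds the corresponding quantity at $v_j$ by at most $2 L L_{\phi} \delta$. Choosing $t = \epsilon/4$ and $\delta = \epsilon/(8 L L_{\phi})$ makes each part at most $\epsilon/2$, giving the claim deterministically on the good event. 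Finally I would verify the sample bound: the failure probability $4(24 L L_{\phi}/\epsilon)^p \exp(-m\epsilon^2/(32\Delta^2))$ is at most $\exp(-p)$ as soon as $m \ge c p \Delta^2 \log(L L_{\phi} p/\epsilon)/\epsilon^2$ for a suitable universal $c$, since taking logarithms turns the net size into a $p\log(L L_{\phi}/\epsilon)$ term that is then absorbed together with the extra $+p$. The point I would emphasize --- and the only mildly delicate feature --- is that the Lipschitz constants $L, L_{\phi}$ enter the sample complexity only through the covering number, hence only \emph{logarithmically}; the boundedness of $\phi$ is what lets Hoeffding apply cleanly, and there is no real obstacle beyond bookkeeping the constants so that the final exponent dominates $p$.
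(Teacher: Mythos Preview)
Your proposal is correct and follows essentially the same route as the paper: Hoeffding for a fixed $v$, an $\epsilon/(8LL_\phi)$-net over $\V\subset\R^p$, a union bound over the net, and then the parameter-Lipschitzness of $D_v$ composed with the Lipschitzness of $\phi$ to pass from the net to all of $\V$. The only cosmetic differences are that the paper phrases the final comparison directly on the two expectations rather than via the inequality $|\sup\hat G-\sup G|\le\sup|\hat G-G|$, and it splits the Lipschitz slack as $\epsilon/8+\epsilon/8$ whereas you bundle it as $2LL_\phi\delta=\epsilon/4$; the constants and the resulting sample bound match.
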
 
\begin{proof}
The proof uses concentration bounds. We show that with high probability, for every discriminator $D_v$,
\begin{align}
|\E_{x\sim \mu}[\phi(D_v(x))] - \E_{x\sim \hat{\mu}}[\phi(D_v(x))]| & \le \epsilon/2,\label{eq:bound1}\\
|\E_{x\sim \nu}[\phi(1 - D_v(x))] - \E_{x\sim \hat{\nu}}[\phi(1-D_v(x))]| & \le \epsilon/2.\label{eq:bound2}
\end{align}
If $d_{\mathcal{F},\phi}(\mu, \nu) = t$, let $D_v$ be the optimal discriminator, we then have
\begin{align*}
d_{\mathcal{F},\phi}(\mu, \nu) & \ge \E_{x\sim \hat{\mu}}[\phi(D_v(x))] + \E_{x\sim \hat{\nu}}[\phi(D_v(x))].\\
& \ge \E_{x\sim \mu}[\phi(D_v(x))] + \E_{x\sim \nu}[\phi(D_v(x))] \\ & \quad - |\E_{x\sim \mu}[\phi(D_v(x))] - \E_{x\sim \hat{\mu}}[\phi(D_v(x))]| \\ &\quad- |\E_{x\sim \nu}[\phi(1 - D_v(x))] - \E_{x\sim \hat{\nu}}[\phi(1-D_v(x))]| \\
& \ge t - \epsilon.
\end{align*}
The other direction is similar.

Now we prove the claimed bounds \eqref{eq:bound1} (proof of \eqref{eq:bound2} is identical). Let $\X$ be a finite set such that every point in $\V$ is within distance $\epsilon/8LL_{\phi}$ of a point in $X$ (a so-called $\epsilon/8LL_{\phi}$-net).  Standard constructions give an $X$ satisfying $\log |\X| \le O(p\log (LL_{\phi}p/\epsilon))$. For every $v\in \X$, by Chernoff bound we know
$$
\Pr[|\E_{x\sim \mu}[\phi(D_v(x))] - \E_{x\sim \hat{\mu}}[\phi(D_v(x))]| \ge \frac{\epsilon}{4}] \le 2\exp(-\frac{\epsilon^2 m}{2\Delta^2}).
$$
Therefore, when $m \ge \frac{Cp\Delta^2\log (LL_{\phi}p/\epsilon)}{\epsilon^2}$ for large enough constant $C$, we can union bound over all $v\in \X$. With high probability (at least $1-\exp(-p)$), for all $v\in \X$ we have $|\E_{x\sim \mu}[\phi(D_v(x))] - \E_{x\sim \hat{\mu}}[\phi(D_v(x))]| \ge \frac{\epsilon}{4}$.

Now, for every $v\in \V$, we can find a $v'\in \X$ such that $\|v-v'\| \le \epsilon/8LL_{\phi}$. Therefore
\begin{align*}
&|\E_{x\sim \mu}[\phi(D_v(x))] - \E_{x\sim \hat{\mu}}[\phi(D_v(x))]| \\
\le & |\E_{x\sim \mu}[\phi(D_{v'}(x))] - \E_{x\sim \hat{\mu}}[\phi(D_{v'}(x))]|\\
& + |\E_{x\sim \mu}[\phi(D_{v'}(x))] - \E_{x\sim \mu}[\phi(D_{v}(x))]| \\
& + |\E_{x\sim \hat{\mu}}[\phi(D_{v'}(x))] - \E_{x\sim \hat{\mu}}[\phi(D_{v}(x))]|\\
\le &\epsilon/4+\epsilon/8+\epsilon/8 \\
\le &\epsilon/2.
\end{align*}
This finishes the proof of \eqref{eq:bound1}.
\end{proof}

Finally, we generalize the above Theorem to hold for {\em all} generators in a family.

\begin{corollary}
[Corollary~\ref{cor:nngeneralizeall} restated]
	In the setting of Theorem~\ref{thm:nngeneralizesingle}, suppose $G^{(1)},G^{(2)},...,G^{(K)}$ ($\log K\ll d$) be the $K$ generators in the $K$ iterations of the training, and assume $\log K \le p$. There is a some universal constant $c$ such that when $m \ge \frac{cp\Delta^2\log (LL_{\phi}p/\epsilon)}{\epsilon^2}$, with probability at least $1-\exp(-p)$, for all $t\in [K]$, 
	\begin{align}
	\left|d_{\mathcal{F},\phi}(\Dreal, \cD_{G^{(t)}}) -d_{\mathcal{F},\phi}(\hatDreal, \hat{\cD}_{G^{(t)}}) \right|\le \eps\,.
	\nonumber
	\end{align}
\end{corollary}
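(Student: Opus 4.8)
The plan is to exploit the additive structure of $d_{\mathcal{F},\phi}$: the payoff separates into a term that depends only on the real distribution and a term that depends only on the generated distribution, and the first term never references any generator. First I would record the elementary inequality $|\sup_v f(v) - \sup_v g(v)| \le \sup_v |f(v)-g(v)|$ and apply it to the two suprema defining $d_{\mathcal{F},\phi}(\Dreal,\cD_{G^{(t)}})$ and $d_{\mathcal{F},\phi}(\hatDreal,\hat{\cD}_{G^{(t)}})$. This bounds the quantity in question by
$$\sup_{v}\left|\E_{x\sim\Dreal}[\phi(D_v(x))]-\E_{x\sim\hatDreal}[\phi(D_v(x))]\right| + \sup_{v}\left|\E_{x\sim\cD_{G^{(t)}}}[\phi(1-D_v(x))]-\E_{x\sim\hat{\cD}_{G^{(t)}}}[\phi(1-D_v(x))]\right|,$$
that is, a \emph{real part} plus a \emph{generated part}, matching exactly the two uniform bounds \eqref{eq:bound1} and \eqref{eq:bound2} established in the proof of Theorem~\ref{thm:nngeneralizesingle}.

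For the real part, the key observation is that the event that $\sup_v |\E_{x\sim\Dreal}[\phi(D_v(x))]-\E_{x\sim\hatDreal}[\phi(D_v(x))]|\le \epsilon/2$ is a function of $\hatDreal$ alone and does not involve any generator. Hence bound \eqref{eq:bound1} holds with probability at least $1-\exp(-p)$ over the draw of $\hatDreal$, and because the event is independent of $t$ it controls the real part simultaneously for all $t\in[K]$. Even though the generators $G^{(t)}$ are themselves trained on $\hatDreal$, this dependence is irrelevant here, since the event does not mention them at all.

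For the generated part I would condition on the entire (data-dependent) sequence $G^{(1)},\dots,G^{(K)}$. Conditioned on these, each $\cD_{G^{(t)}}$ is a fixed distribution and $\hat{\cD}_{G^{(t)}}$ consists of fresh i.i.d.\ samples drawn from it after the generator is fixed (as permitted by Definition~\ref{def:generalization}). Thus for each fixed $t$ the $\epsilon$-net-plus-Chernoff argument underlying bound \eqref{eq:bound2} applies verbatim. I would take the universal constant $c$ large enough that the per-generator failure probability is at most $\exp(-2p)$, and then union-bound over $t\in[K]$; using $\log K\le p$ this yields a total failure probability of at most $K\exp(-2p)\le \exp(-p)$ for the generated part. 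Since this conditional statement holds for every realization of the generators, it holds unconditionally.

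The main obstacle, and the only genuinely delicate point, is the data dependence of the generators: naively one might fear that $\hat{\cD}_{G^{(t)}}$ and the real-sample concentration are entangled, forcing a uniform bound over the entire (continuum) generator class. The separation above sidesteps this, since the real-part event is decoupled from the generators entirely, while the generated-part events are handled by conditioning, which makes each $G^{(t)}$ a fixed object so that only a discrete union bound over the $K$ iterates is needed. Finally I would combine the two bad events by a union bound, absorbing the resulting factor into the constant $c$, and add $\epsilon/2+\epsilon/2=\epsilon$ to conclude.
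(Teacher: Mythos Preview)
Your proposal is correct and follows essentially the same approach as the paper's own proof: separate the discrepancy into the ``real part'' governed by \eqref{eq:bound1}, which is a single event in $\hatDreal$ independent of $t$, and the ``generated part'' governed by \eqref{eq:bound2}, which holds for each $t$ because the samples from $\cD_{G^{(t)}}$ are fresh, and then union-bound over the $K$ iterates using $\log K\le p$. Your write-up is in fact more explicit than the paper's (you spell out the $|\sup f-\sup g|\le\sup|f-g|$ step, the conditioning argument, and the adjustment of the constant to get per-iterate failure $\exp(-2p)$), but the underlying argument is the same.
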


\begin{proof}
This follows from the proof of Theorem~\ref{thm:nngeneralizesingle}. Note that we have fresh samples for every generator distribution, so Equation \eqref{eq:bound2} is true with high probability by union bound. For the real distribution, notice that Equation \eqref{eq:bound1} does not depend on the generator, so it is also true with high probability.
\end{proof}

\subsection{Omitted Proof for Section~\ref{sec:infinitemix}: Expressive power and existence of equilibrium} \label{sec:app:section4}

\paragraph{Mixed Equilibrium}

We first show there is a finite mixture of generators and discriminators that approximates the equilibrium of infinite mixtures.

Again we recall the settings here: suppose $\phi$ is $L_{\phi}$-Lipschitz and bounded in $[-\Delta,\Delta]$, the generator and discriminators are $L$-Lipschitz with respect to the parameters and $L'$-Lipschitz with respect to inputs.

\begin{theorem}[Theorem~\ref{thm:mixedequilibrium} restated]
In the settings above, if the generator can approximate any point mass\footnote{For all points $x$ and any $\epsilon>0$, there is a generator such that $\E_{h\sim\D_h}[\|G(h)-x\|] \le \epsilon$.}, there is a universal constant $C>0$ such that for any $\epsilon$, there exists $T = \frac{C\Delta^2 p\log (L L' L_{\phi}\cdot p/\epsilon)}{\epsilon^2}$ generators $G_{u_1},\ldots G_{u_T}$. 
Let $\S_u$ be a uniform distribution on $u_i$, and $D$ is a discriminator that outputs only $1/2$, 
then $(\S_u,D)$ is an $\epsilon$-approximate equilibrium. 
\end{theorem}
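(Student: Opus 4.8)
The plan is to exhibit an infinite mixture of generators whose induced distribution reproduces $\D_{real}$, argue via concavity of $\phi$ that this mixture holds \emph{every} discriminator to payoff at most $2\phi(1/2)$, and then sparsify the mixture down to $T$ components by i.i.d.\ sampling together with a union bound over an $\epsilon$-net of discriminators. Throughout, the target value is $V = 2\phi(1/2)$. I would first dispose of the easy inequality: since $D$ outputs the constant $1/2$, for every $u\in\U$ we have $F(u,D) = \E_{x\sim\D_{real}}[\phi(1/2)] + \E_{x\sim\cD_{G_u}}[\phi(1/2)] = 2\phi(1/2) = V$, so the second defining inequality $\E_{v\sim\S_v}[F(u,v)]\ge V-\epsilon$ holds trivially with room to spare.

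Next I would build the infinite mixture. The assumption that the generator can approximate any point mass lets me realize $\D_{real}$ as a mixture $\int \cD_{G_{u(x)}}\,d\D_{real}(x)$, where each $G_{u(x)}$ produces a distribution within Wasserstein distance $\delta$ of the point mass at $x$; call the resulting mixed strategy $\S^*$, with output distribution $\cD_{\S^*}\approx\D_{real}$. For any fixed discriminator $v$, the map $x\mapsto\phi(1-D_v(x))$ is $L_\phi L'$-Lipschitz (the $L'$-Lipschitz-in-input discriminator composed with the $L_\phi$-Lipschitz $\phi$), so expectations transfer from $\cD_{\S^*}$ to $\D_{real}$ up to $L_\phi L'\delta$, giving
\[
\E_{u\sim\S^*}[F(u,v)] \le \E_{x\sim\D_{real}}\bigl[\phi(D_v(x)) + \phi(1-D_v(x))\bigr] + O(L_\phi L'\delta).
\]
The key step is pointwise concavity: $\phi(D_v(x)) + \phi(1-D_v(x)) \le 2\phi(1/2)$ for every $x$, because $\phi$ is concave and the two arguments average to $1/2$. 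Choosing $\delta$ small enough that the error term falls below $\epsilon/4$ yields $\E_{u\sim\S^*}[F(u,v)] \le 2\phi(1/2) + \epsilon/4$ \emph{simultaneously} for all $v$.

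Finally I would sparsify by sampling. Draw $u_1,\dots,u_T$ i.i.d.\ from $\S^*$ and let $\S_u$ be uniform on them. For a fixed $v$, the variables $F(u_i,v)$ are i.i.d., lie in $[-2\Delta,2\Delta]$, and have mean at most $2\phi(1/2)+\epsilon/4$, so a Hoeffding bound gives $\Pr\bigl[\tfrac1T\sum_i F(u_i,v) > 2\phi(1/2)+\epsilon/2\bigr] \le \exp(-\Omega(T\epsilon^2/\Delta^2))$. To make this hold for all $v$ at once I take an $\epsilon/(8L_\phi L)$-net over $\V$, whose log-cardinality is $O(p\log(LL'L_\phi p/\epsilon))$ by the standard covering bound; since $F(u,\cdot)$ is $2L_\phi L$-Lipschitz in the discriminator parameters, controlling the net points controls all $v$ up to an extra $\epsilon/4$. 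The union bound succeeds once $T = \Theta(\Delta^2 p\log(LL'L_\phi p/\epsilon)/\epsilon^2)$, matching the claimed value, and then $\E_{u\sim\S_u}[F(u,v)] = \tfrac1T\sum_i F(u_i,v) \le V+\epsilon$ for all $v$ with probability $1-\exp(-p)$.

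The main obstacle is the middle step: correctly realizing $\D_{real}$ as an infinite mixture of point-mass-approximating generators and tracking the approximation error through the Lipschitz constants $L_\phi,L'$ so that the concavity bound $2\phi(1/2)$ survives. The concentration-plus-net argument in the last paragraph is essentially identical to the proof of Theorem~\ref{thm:nngeneralizesingle} and is routine by comparison; note in particular that the von Neumann theorem (Theorem~\ref{thm:minmax}) is not actually needed here, because the discriminator's near-optimal response is pinned down explicitly as the constant $1/2$.
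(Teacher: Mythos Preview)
Your proposal is correct and follows the same three-part skeleton as the paper: (i) use the point-mass assumption plus concavity of $\phi$ to pin the value at $2\phi(1/2)$, (ii) sample $T$ generators from a good mixed strategy, (iii) apply Hoeffding plus an $\epsilon$-net over $\V$ to make the bound uniform in $v$. The constants and Lipschitz bookkeeping are handled correctly.

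The one substantive difference is in step (ii). The paper constructs the explicit mixture $\S^*$ (sample $x\sim\D_{real}$, then use a generator approximating the point mass at $x$) only to prove $V\le 2\phi(1/2)$; it then invokes von Neumann's min--max theorem to obtain an optimal mixed strategy $\S'_u$ and samples the $u_i$ from \emph{that}. You instead sample the $u_i$ directly from $\S^*$, which already satisfies $\E_{u\sim\S^*}[F(u,v)]\le 2\phi(1/2)+\epsilon/4$ for all $v$ by construction. Your route is slightly more elementary---as you note, Theorem~\ref{thm:minmax} is never used---at the cost of carrying the extra $O(L_\phi L'\delta)$ slack through the concentration step rather than absorbing it into the game value beforehand. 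Both arrive at the same $T$, and the net/union-bound portion is identical in the two arguments.
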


%
%
%

\begin{proof}
We first prove the value $V$ of the game must be equal to $1/2$.For the discriminator, one strategy is to just output $1/2$. This strategy has payoff $2\phi(1/2)$ no matter what the generator does, so $V \ge 2\phi(1/2)$. 

For the generator, we use the assumption that for any point $x$ and any $\epsilon>0$, there is a generator (which we denote by $G_{x,\epsilon}$) such that $\E_{h\sim \D_h}[\|G_{x,\epsilon}(h) - x\|] \le \epsilon$.
Now for any $\zeta>0$, consider the following mixture  of generators: sample $x\sim D_{real}$, then use the generator $D_{x,\zeta}$. Let $\D_\zeta$ be the distribution generated by this mixture of generators. 
The Wasserstein distance between $\D_\zeta$ and $\D_{real}$ is bounded by $\zeta$. Since the discriminator is $L'$-Lipschitz, it cannot distinguish between $\D_\zeta$ and $\D_{real}$. In particular we know for any discriminator $D_v$
$$
|\E_{x\sim\D_\zeta}[\phi(1-D_v(x))] - \E_{x\sim\D_{real}}[\phi(1-D_v(x))]| \le O(L_{\phi}L'\zeta).
$$
Therefore,
\begin{align*}
& \max_{v\in \V} \E_{x\sim\D_{real}}[\phi(D_v(x))] + \E_{x\sim\D_\zeta}[\phi(1-D_v(x))] \\
\le & O(L_{\phi}L'\zeta) + \max_{v\in \V} \E_{x\sim\D_{real}}[\phi(D_v(x)) + \phi(1-D_v(x))] \\
 \le & 2\phi(1/2)+O(L_{\phi}L'\zeta).
\end{align*}
Here the last step uses the assumption that $\phi$ is concave. Therefore the value is upperbounded by $V \le 2\phi(1/2)+O(L_{\phi}L'\zeta)$ for any $\zeta$. Taking limit of $\zeta$ to $0$, we have $V = 2\phi(1/2)$.

The value of the game is $2\phi(1/2)$ in particular means the optimal discriminator cannot do anything other than a random guess. Therefore we will use a discriminator that outputs constant $1/2$. Next we will construct the generator.

Let ($\S'_u$,$\S'_v$) be the pair of optimal mixed strategies as in Theorem~\ref{thm:minmax} and $V$ be the optimal value. We will show that randomly sampling $T$ generators from $\S'_u$ gives the desired mixture with high probability.

Construct $\epsilon/4LL'L_{\phi}$-nets $V$ for the parameters of the discriminator $\mathcal{V}$. By standard construction, the sizes of these $\epsilon$-nets satisfy $\log (|V|) \le C'n\log (LL' L_{\phi}\cdot p/\epsilon)$ for some constant $C'$. Let $u_1,u_2,...,u_T$ be independent samples from $\S'_u$. By Chernoff bound, for any $v\in V$, we know
$$
\Pr[ \E_{i\in [T]}[F(u_i,v)] \ge \E_{u\in\mathcal{V}}[F(u,v)] + \epsilon/2] \le \exp(-\frac{\epsilon^2 T}{2\Delta^2}).
$$

When $T = \frac{C\Delta^2 p\log (L\cdot L_{\phi}\cdot p/\epsilon)}{\epsilon^2}$ and the constant $C$ is large enough ($C \ge 2C'$), with high probability this inequality is true for all $v\in V$. Now, for any $v\in \mathcal{V}$, let $v'$ be the closest point in the $\epsilon$-net. By the construction of the net, $\|v-v'\| \le \epsilon/4LL'L_{\phi}$. It is easy to check that $F(u,v)$ is $2LL'L_{\phi}$-Lipschitz in both $u$ and $v$, therefore
$$
\E_{i\in [T]}[F(u_i,v')] \le \E_{i\in [T]}[F(u_i,v)] + \epsilon/2.
$$
Combining the two inequalities we know for any $v'\in \mathcal{V}$, 
$$
\E_{i\in [T]}[F(u_i,v')] \le 2\phi(1/2)+\epsilon.
$$

This means the mixture of generators can win against any discriminator. By probabilistic argument we know there must exist such generators. The discriminator (constant $1/2$) obviously achieve value $V$ no matter what the generator is. Therefore we get an approximate equilibrium.

\end{proof}

\paragraph{Pure equilibrium} 

Now we show how to construct a larger generator network that gives a pure equilibrium.

\begin{theorem}[Theorem~\ref{thm:pureequilibrium} restated]
Suppose the generator and discriminator are both $k$-layer neural networks ($k\ge 2$) with $p$ parameters, and the last layer uses ReLU activation function. In the setting of Theorem~\ref{thm:mixedequilibrium}, there exists $k+1$-layer neural networks of generators $G$ and discriminator $D$ with $O\left(\frac{\Delta^2 p^2\log (L L' L_{\phi}\cdot p/\epsilon)}{\epsilon^2}\right)$ parameters, such that there exists an $\epsilon$-approximate pure equilibrium with value $2\phi(1/2)$.
\end{theorem}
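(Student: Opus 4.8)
The plan is to build the pure equilibrium directly on top of the mixed equilibrium of Theorem~\ref{thm:mixedequilibrium}, turning the generator's mixed strategy into a single network while keeping the discriminator's (already pure) strategy. Theorem~\ref{thm:mixedequilibrium} supplies $T = \frac{C\Delta^2 p \log(LL'L_\phi p/\epsilon)}{\epsilon^2}$ generators $G_{u_1},\ldots,G_{u_T}$ such that the uniform mixture $\S_u$ together with the constant discriminator $D\equiv 1/2$ is an $\epsilon$-approximate equilibrium of value $2\phi(1/2)$. The discriminator is already realized by a single net (it outputs the constant $1/2$), so the entire task is to fold the uniform mixture of the $T$ generators into one network whose output distribution equals $\frac1T\sum_i \cD_{G_{u_i}}$.

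First I would run all $T$ sub-generators in parallel on the shared Gaussian seed $h$; placing $T$ copies of a $p$-parameter, $k$-layer net side by side costs $Tp=\tilde O(p^2/\epsilon^2)$ parameters, which already accounts for the quadratic blow-up in the statement. To pick one branch uniformly at random I would reserve one coordinate $g$ of the Gaussian input and split the real line into $T$ buckets of equal Gaussian mass by thresholds $t_1<\cdots<t_{T-1}$; the bucket $i^\ast$ containing $g$ is then uniform on $[T]$ and independent of the remaining seed coordinates that feed the generators. The key gadget is a \emph{multi-way selector}: from $g$ it computes masks $s_1,\ldots,s_T$ with $s_{i^\ast}=0$ for the chosen bucket and $s_i=-M$ (a large negative constant) otherwise. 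Each mask is a combination of two sharp step functions of $g$ at adjacent thresholds, and a single hidden layer of ReLU gates approximates such steps arbitrarily well, so the selector costs only $O(T)$ extra gates.

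Next I would inject each scalar mask $s_i$ additively into all $d$ pre-activations of the last (ReLU) layer of the $i$-th sub-generator, and then take a linear sum of the $T$ masked branches as the output. The step-gate layer runs in parallel with the generators and feeds into their layer-$k$ pre-activations, and the final summation is one linear readout, so the depth increases from $k$ to exactly $k+1$. For the selected branch, $s_{i^\ast}=0$ leaves its ReLU output equal to $G_{u_{i^\ast}}(h)$; for every other branch, subtracting $M$ drives the ReLU to $0$ whenever $M$ exceeds the range of the pre-activations, which is bounded with high probability via the $L'$-Lipschitzness of the nets. Hence $G(h)=G_{u_{i^\ast}}(h)$ and the output of $G$ is distributed as $\frac1T\sum_i\cD_{G_{u_i}}$. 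Because $F(u,v)$ depends on the generator only through its output distribution, $F(G,v)$ equals $\E_{u\sim\S_u}[F(u,v)]\le 2\phi(1/2)+\epsilon$ for all $v$ (up to the negligible selector error), while every generator scores exactly $\phi(1/2)+\phi(1/2)=2\phi(1/2)$ against the constant-$1/2$ discriminator, giving the matching lower bound; thus $(G,D)$ is an $\epsilon$-approximate pure equilibrium of value $2\phi(1/2)$.

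The step I expect to be the main obstacle is the faithful implementation of the multi-way selector within a single extra layer and within the parameter budget: I must realize a hard, argmax-style selection using only ReLU gates (which compute continuous piecewise-linear functions), quantify the error introduced by the finite step sharpness and the finite disabling constant $M$, and argue that the unbounded Gaussian pre-activations cause the selector to misfire only on an event of negligible probability — small enough that, since $\phi$ is bounded in $[-\Delta,\Delta]$, the resulting payoff perturbation stays below $\epsilon$ after rescaling constants. Everything else (the parameter count, the single-layer depth increase, and the two equilibrium inequalities) then follows routinely from Theorem~\ref{thm:mixedequilibrium} and the constancy of the discriminator.
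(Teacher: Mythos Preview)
Your proposal is correct and follows essentially the same approach as the paper: build the selector from ReLU step functions on one Gaussian coordinate, additively disable all but the chosen branch at the last ReLU layer, and sum in one extra linear layer, yielding a $(k{+}1)$-layer net with $\tilde O(p^2/\epsilon^2)$ parameters whose output is within small total variation of the uniform mixture; then combine with the constant-$1/2$ discriminator from Theorem~\ref{thm:mixedequilibrium}. The paper makes the error accounting explicit by first invoking an $\epsilon/2$-approximate mixed equilibrium and then choosing the selector's TV error $\delta\le\epsilon/(4\Delta)$ so that the payoff shift $2\Delta\delta$ absorbs the slack, which is exactly the ``rescaling constants'' step you anticipate.
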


%

In order to prove this theorem, the major step is to construct a generator that works as a mixture of generators. More concretely, we need to construct a single neural network that approximately generates the mixture distribution using the gaussian input it has.
To do that, we can pass the input $h$ through all the generators $G_{u_1}, G_{u_2},..., G_{u_T}$. We then show how to implement a ``multi-way selector'' that will select a uniformly random output from $G_{u_i}(h)$ ($i\in [T]$).

We first observe that it is possible to compute a step function using a two layer neural network. This is fairly standard for many activation functions.

\begin{lemma} \label{lem:stepfunc}
Fix an arbitrary $q\in \N$ and $z_1 < z_2 < \cdots < z_q$. For any $0 <\delta <\min\{z_{i+1}-z_i\}$, there is a two-layer neural network with a single input $h\in \mathbb{R}$ that outputs $q+1$ numbers $x_1,x_2,...,x_{q+1}$ such that (i) $\sum_{i=1}^{q+1} x_i = 1$ for all $h$; (ii) when $h\in [z_{i-1}+\delta/2,z_i-\delta/2]$, $x_i = 1$ and all other $x_j$'s are 0\footnote{When $h \le z_1 -\delta/2$ only $x_1$ is 1 and when $h \ge z_q+\delta/2$ only $x_{q+1} = 1$}.
\end{lemma}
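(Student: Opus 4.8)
The plan is to build the outputs $x_1,\dots,x_{q+1}$ from a single family of "soft step" functions, one per threshold $z_i$, each implemented by a pair of ReLU gates. For a threshold $z$ I would define
$$
s(h;z) \;=\; \tfrac{1}{\delta}\Bigl[\mathrm{ReLU}(h - z + \tfrac{\delta}{2}) - \mathrm{ReLU}(h - z - \tfrac{\delta}{2})\Bigr],
$$
where $\mathrm{ReLU}(t)=\max\{t,0\}$. A direct case check shows $s(h;z)=0$ for $h\le z-\delta/2$, that $s(h;z)=1$ for $h\ge z+\delta/2$, and that $s$ interpolates linearly in between; in particular $s(\cdot;z)$ is a genuine $\{0,1\}$-valued step outside the width-$\delta$ transition window centered at $z$.

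First I would form all $q$ steps $s(h;z_1),\dots,s(h;z_q)$ using one hidden layer consisting of the $2q$ ReLU gates above (two per threshold), and then read off the outputs by telescoping differences,
$$
x_1 = 1 - s(h;z_1), \qquad x_i = s(h;z_{i-1}) - s(h;z_i)\ \ (2\le i\le q), \qquad x_{q+1}=s(h;z_q),
$$
with the constant $1$ absorbed into the output bias. Since the $x_i$ are fixed affine combinations of the hidden ReLU outputs, this is exactly a two-layer network with single input $h$. Claim (i) is then immediate: the sum telescopes to $1 - s(h;z_1) + \bigl(s(h;z_1)-s(h;z_q)\bigr) + s(h;z_q) = 1$, identically in $h$.

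For claim (ii) I would invoke the hypothesis $\delta < \min_i (z_{i+1}-z_i)$, which guarantees the transition windows $[z_j-\delta/2,\,z_j+\delta/2]$ are pairwise disjoint, so outside all of them every $s(h;z_j)$ has already locked onto $0$ or $1$. Fixing $i$ and assuming $h\in[z_{i-1}+\delta/2,\,z_i-\delta/2]$, monotonicity of the thresholds gives $s(h;z_j)=1$ for $j\le i-1$ and $s(h;z_j)=0$ for $j\ge i$; substituting into the telescoping definition kills every output except $x_i=s(h;z_{i-1})-s(h;z_i)=1-0=1$. The two endpoint cases $i=1$ (where $h\le z_1-\delta/2$ forces all $s(h;z_j)=0$) and $i=q+1$ (where $h\ge z_q+\delta/2$ forces all $s(h;z_j)=1$) are handled identically.

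There is no serious obstacle here; the construction is standard, and the only care needed is bookkeeping: verifying that $s(h;z)$ is a clean step outside its window, checking the telescoping both for the sum-to-one identity and for interval selection, and applying the gap condition $\delta<\min_i(z_{i+1}-z_i)$ at precisely the step where disjointness of the transition windows is used. The resulting gate count is $2q$ in the hidden layer, which is the quantity that will be folded into the larger generator network in the proof of Theorem~\ref{thm:pureequilibrium}.
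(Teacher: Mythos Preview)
Your proposal is correct and follows essentially the same construction as the paper: build soft step functions from pairs of ReLUs and read off the $x_i$ by telescoping differences, with the constant $1$ supplied as a bias. Your write-up is in fact cleaner than the paper's, which contains a sign slip in its definition of $f_i$ and an index typo in the telescoping; your version of $s(h;z)$ has the signs right and your verification of (i) and (ii) is complete.
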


\begin{proof}
Using a two layer neural network, we can compute the function $f_i(h) = \max\{\frac{h-z_i-\delta/2}{\delta},0\} - \max\{\frac{h-z_i+\delta/2}{\delta},0\}$. This function is 0 for all $h < z_i-\delta/2$, 1 for all $h \ge z_i+\delta/2$ and change linearly in between. Now we can write $x_1 = 1 - f_1(h)$, $x_{q+1} = f_q(h)$, and for all $i=2,3,...,q$, $x_q = f_i(h) - f_{i-1}(h)$. It is not hard to see that these functions satisfy our requirements.
\end{proof}

Using these step functions, we can essentially select one output from the $T$ generators.

\begin{lemma}\label{lem:select}
In the setting of Theorem~\ref{thm:pureequilibrium}, for any $\delta > 0$, there is a $k+1$-layer neural network with $O\left(\frac{\Delta^2 p^2\log (L L' L_{\phi}\cdot p/\epsilon)}{\epsilon^2}\right)$ parameters that can generate a distribution that is within $\delta$ total variational difference with the mixture of $G_{u_1}, G_{u_2}, ..., G_{u_T}$.
\end{lemma}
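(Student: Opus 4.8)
The plan is to construct explicitly the $(k+1)$-layer generator network that realizes the mixture of $G_{u_1},\dots,G_{u_T}$, relying on the step-function gadget from Lemma~\ref{lem:stepfunc}. The first step is parameter accounting: each $G_{u_i}$ has $p$ parameters, and since $T = O\left(\frac{\Delta^2 p\log(LL'L_\phi p/\epsilon)}{\epsilon^2}\right)$ by Theorem~\ref{thm:mixedequilibrium}, laying down all $T$ copies in parallel uses $Tp = O\left(\frac{\Delta^2 p^2\log(LL'L_\phi p/\epsilon)}{\epsilon^2}\right)$ parameters, matching the claimed bound. I would feed the Gaussian input $h$ into all $T$ generators simultaneously, so that the bottom $k$ layers compute $G_{u_1}(h),\dots,G_{u_T}(h)$ in parallel.

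The second step is the selector. I would introduce one fresh auxiliary coordinate of the Gaussian input, call it $h_0\sim\N(0,1)$, independent of the coordinates driving the generators, and pick thresholds $z_1<\cdots<z_{T-1}$ so that the events $\{h_0\in[z_{i-1},z_i]\}$ each carry probability $1/T$ under the Gaussian CDF (i.e.\ choose $z_i$ at the $i/T$ quantiles). Applying Lemma~\ref{lem:stepfunc} with $q=T-1$ to $h_0$ yields indicator-like outputs $x_1,\dots,x_T$ that, outside the width-$\delta$ transition bands, form a one-hot vector selecting index $i$ with probability $\approx 1/T$. This is the additional $(k{+}1)$-th layer. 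To actually gate the outputs, I would exploit the final ReLU layer of each $G_{u_i}$ as stated in the hypothesis: by adding a large negative bias $-M x_j$-style term (concretely, subtracting a large multiple of $(1-x_i)$ before the ReLU) I can force every non-selected generator's final ReLU output to $0$ while leaving the selected one intact, then sum the gated outputs. Because exactly one $x_i$ equals $1$ on the good event, the summed output equals $G_{u_i}(h)$ for the chosen $i$, so the realized distribution is exactly the uniform mixture $\S_u$ conditioned on $h_0$ avoiding the transition bands.

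The third step is the total-variation estimate. The only discrepancy between this network's output distribution and the true uniform mixture comes from (a) the width-$\delta$ transition bands of the step functions, where $x_i$ may not be exactly one-hot, and (b) the slight deviation of the quantile-based band probabilities from $1/T$. Both contribute at most $O(T\delta)$ in probability mass under $h_0$, since each of the $T-1$ transition windows has Gaussian measure $O(\delta)$; on the complementary event the output is distributed exactly as the mixture. Hence choosing $\delta$ small enough (e.g.\ $\delta = \Theta(\delta_{\mathrm{target}}/T)$, which does not affect the parameter count since $\delta$ only enters the weights of the selector layer, not their number) gives total variation at most $\delta$ as required, which is the conclusion of Lemma~\ref{lem:select}.

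I expect the main obstacle to be the gating mechanism: implementing a clean hard selection within a ReLU network without introducing extra depth or blowing up the parameter count. The subtlety is that multiplying $x_i$ by $G_{u_i}(h)$ is not directly a ReLU operation, so one must instead additively disable the non-selected nets by pushing their pre-activation below $0$ via a large negative offset keyed to $(1-x_i)$, and argue this offset dominates the bounded (by Lipschitzness and $\U$ lying in the unit ball) pre-activation values of $G_{u_i}$. Verifying that this ``large negative input'' can be wired through the existing ReLU structure of the final layer, uniformly over the bounded output range, is the delicate point; once that is established, the TV bound and parameter count are routine. Everything else reduces to Lemma~\ref{lem:stepfunc} and the mixed-equilibrium guarantee of Theorem~\ref{thm:mixedequilibrium}, after which substituting this single pure generator (and the constant-$1/2$ discriminator) into the $\epsilon$-approximate equilibrium of Theorem~\ref{thm:mixedequilibrium} yields an $\epsilon$-approximate pure equilibrium with value $2\phi(1/2)$.
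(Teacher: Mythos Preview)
Your proposal is correct and matches the paper's proof essentially step for step: an auxiliary Gaussian coordinate $h_0$ with quantile thresholds, the step-function gadget of Lemma~\ref{lem:stepfunc} with width $\delta'=\Theta(\delta/T)$, disabling non-selected generators by subtracting a large multiple of $(1-x_i)$ before their final ReLU, and a summation layer. One small bookkeeping point: the selector network is two layers and is computed \emph{in parallel} with the first two layers of the generators (using $k\ge 2$), so that $x_i$ is available to modify the $k$-th (final ReLU) layer; the genuinely new $(k{+}1)$-th layer is the summation, not the selector itself.
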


The idea is simple: since we have implemented step functions from Lemma~\ref{lem:stepfunc}, we can just pass through the input through all the generators $G_{u_1}, ..., G_{u_T}$. For the last layer of $G_{u_i}$, we add a large multiple of $-(1-x_i)$ where $x_i$ is the $i$-th output from the network in Lemma~\ref{lem:stepfunc}. Clearly, if $x_i = 0$ this is going to effectively disable the neural network; if $x_i = 1$ this will have no effect. By properties of $x_i$'s we know most of the time only one $x_i = 1$, hence only one generator is selected.

\begin{proof}
Suppose the input for the generator is $(h_0,h)\sim N(0,1)\times \D_h$ (i.e. $h_0$ is sampled from a Gaussian, $h$ is sampled according to $\D_h$ independently). We pass the input $h$ through the generators and gets outputs $G_{u_i}(h)$, then we use $h_0$ to select one as the true output. 

Let $z_1,z_2,...,z_{T-1}$ be real numbers that divides the probability density of a Gaussian into $T$ equal parts. Pick $\delta' = \delta/100T$ in Lemma~\ref{lem:stepfunc}, we know there is a 2-layer neural network that computes step functions $x_1,...,x_{T}$. Moreover, the probability that $(x_1,...,x_{T})$ has more than 1 nonzero entry is smaller than $\delta$. Now, for the output of $G_{u_i}(h)$, in each output ReLU gate, we add a very large multiple of $-(1-x_i)$ (larger than the maximum possible output). This essentially ``disables'' the output when $x_i = 0$ because before the result before ReLU is always negative. On the other hand, when $x_i = 1$ this preserves the output. Call the modified network $\hat{G}_{u_i}$, we know $\hat{G}_{u_i} = G_{u_i}$ when $x_i = 1$ and $\hat{G}_{u_i} = 0$ when $x_i = 0$.  Finally we add a layer that outputs the sum of $\hat{G}_{u_i}$. By construction we know when $(x_1,...,x_T)$ has only one nonzero entry, the network correctly outputs the corresponding $G_{u_i}(x_i)$. The probability that this happens is at least $1-\delta$ so the total variational distance with the mixture is bounded by $\delta$.
\end{proof}

Using the generator  we constructed, it is not hard to prove Theorem~\ref{thm:pureequilibrium}. The only thing to notice here is that when the generator is within $\delta$ total variational distance to the true mixture, the payoff $F(u,v)$ can change by at most $2\Delta \delta$. 

\begin{proof}[Proof of Theorem~\ref{thm:pureequilibrium}] Let $T$ be large enough so that there exists an $\epsilon/2$-approximate mixed equilibrium. 
Let the new set of generators be constructed as in Lemma~\ref{lem:select} with $\delta\le \epsilon/4\Delta$ and $G_{u_1},...,G_{u_T}$ from the original set of generators. Let $D$ be the discriminator that always outputs $1/2$,
 and $G$ be the generator constructed by the $T$ generators from the approximate mixed equilibrium. Define $F^\star(G,D)$ be the payoff of the new two-player game. Now, for any discriminator $D_v$, we have
\begin{align*}
F^\star(G,v) &\le \E_{i\in[T]}F(u_i,v) + |F^\star(G,D')-\E_{i\in[T]}F(u_i,v)|\\
& \le V+\epsilon/2 + 2\Delta\frac{\epsilon}{4\Delta} \\
& \le V+\epsilon.
\end{align*}
The bound from the first term comes from Theorem~\ref{thm:mixedequilibrium}, and the fact that the expectation is smaller than the max. The bound for the second term comes from the fact that changing a $\delta$ fraction of probability mass can change the payoff $F$ by at most $2\Delta \delta$. Therefore the generator can still fool all discriminators, and we get a pure equilibrium.

\end{proof}

\section{Examples when best response fail}
\label{sec:examples}
In this section we construct simple generators and discriminators and show that if both generators and discriminators are trained to optimal with respect to Equation~\eqref{eq:gan-general}, the solution will cycle and cannot converge. For simplicity, we will show this when the connection function is $\phi$, but it is possible to show similar result even when $\phi$ is the traditional $\log$ function.

We consider a simple example where we try to generate points on a circle. The true distribution $\mathcal{D}_{true}$ has $1/3$ probability to generate a point at angle $0, 2\pi/3, 4\pi/3$. Let us first consider a case when the generator does not have enough capacity to generate the true distribution. 

\begin{definition}[Example 1] The generator $G$ has one parameter $\theta \in [0,2\pi)$, and always generates a point at angle $\theta$. The discriminator $D$ has a parameter $\phi \in [0,2\pi)$, and $D_\phi(\tau) = \exp(-10d(\tau,\phi)^2)$. Here $d(\tau,\phi)$ is the angle between $\tau$ and $\phi$ and is always between $[0,\pi]$.
\end{definition}

We will analyze the ``best response'' procedure (as in Algorithm~\ref{alg:bestresponse}). We say the procedure converges if $\lim_{i\to\infty} \phi^i$ and $\lim_{i\to\infty} \theta^i$ exist.

\begin{algorithm}
\begin{algorithmic}
\STATE Initialize $\theta^0$.
\FOR{$i = 1$ to $T$}
\STATE Let $\phi^i = \arg\max_\phi \E_{\tau\sim \mathcal{D}_{true}}[D_\phi(\tau)] - \E_{\tau\sim  G(\theta)}[D_\phi(\tau)].$
\STATE Let $\theta^i = \arg\min_\theta - \E_{\tau\sim  G(\theta)}[D_\phi(\tau)].$
\ENDFOR
\end{algorithmic}
\caption{Best Response}\label{alg:bestresponse}
\end{algorithm}

\begin{theorem}
For generator $G$ and discriminator $D$ in Example 1, for every choice of parameter $\theta$, there is a choice of $\phi$ such that $D_\phi(\theta) \le 0.001$ and $\E_{\tau\sim \mathcal{D_{true}}}[D_\phi(\tau)] \ge 1/3$. On the other hand, for every choice of $\phi$, there is always a $\theta$ such that $D_\phi(\theta) = 1$. As a result, the sequence of best response cannot converge.
\end{theorem}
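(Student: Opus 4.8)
The plan is to prove the three assertions in order: the first two are direct computations exploiting the geometry of the three equally-spaced target points, and the third is a continuity-plus-contradiction argument built on top of them. For the first assertion, fix any generator angle $\theta$. The three true points $\{0,2\pi/3,4\pi/3\}$ are mutually at angular distance $2\pi/3$, and I would show that at least one of them lies at angular distance $\ge 2\pi/3$ from $\theta$. This follows by reducing modulo the $3$-fold rotational symmetry to $\theta\in[0,\pi/3]$, where the three distances are $\theta$, $2\pi/3-\theta$, and $2\pi/3+\theta$, whose maximum is $\ge 2\pi/3$. Taking $\phi$ to be this farthest true point gives $D_\phi(\theta)=\exp(-10\,d(\theta,\phi)^2)\le \exp(-10(2\pi/3)^2)\ll 0.001$, while $\E_{\tau\sim\mathcal{D}_{true}}[D_\phi(\tau)]\ge \frac{1}{3}D_\phi(\phi)=\frac{1}{3}$, since $\phi$ is itself a true point (so $D_\phi(\phi)=1$) and the other two terms are nonnegative.

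The second assertion is immediate: choosing $\theta=\phi$ makes $d(\theta,\phi)=0$ and hence $D_\phi(\theta)=1$. Turning to non-convergence, I would first pin down the best-response map. The generator step maximizes $D_{\phi^i}(\theta)$ over $\theta$, whose unique maximizer is $\theta^i=\phi^i$; thus the generator always jumps exactly onto the current discriminator. The discriminator step maximizes $f_{i-1}(\phi):=\E_{\tau\sim\mathcal{D}_{true}}[D_\phi(\tau)]-D_\phi(\theta^{i-1})$, and the first assertion guarantees that this maximum is at least $\frac{1}{3}-0.001>0.33$ for every possible $\theta^{i-1}$; hence $f_{i-1}(\phi^i)\ge 0.33$ at every iteration.

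Now I would argue by contradiction. Suppose $\phi^i\to\phi^\star$ and $\theta^i\to\theta^\star$. Because $\theta^i=\phi^i$ for every $i$, the limits coincide, $\theta^\star=\phi^\star$. Since the angular metric, and therefore $D$, is continuous on the compact circle $S^1$, the map $(\phi,\theta)\mapsto \E_\tau[D_\phi(\tau)]-D_\phi(\theta)$ is jointly continuous, so $f_{i-1}(\phi^i)\to \E_\tau[D_{\phi^\star}(\tau)]-D_{\phi^\star}(\phi^\star)=\E_\tau[D_{\phi^\star}(\tau)]-1\le 0$. This contradicts $f_{i-1}(\phi^i)\ge 0.33$, so neither sequence can converge.

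The main obstacle is precisely this last step: making the intuitive "cycling" rigorous. The delicate point is to combine the lower bound on the discriminator's best-response value (valid at every iteration regardless of the history, from the first assertion) with the fact that the generator's jump $\theta^i=\phi^i$ drives the penalty term $D_{\phi}(\theta^i)$ up to $1$ at the current discriminator, so any fixed point would have to simultaneously score value $\ge 0.33$ and $\le 0$. I expect this to go through cleanly via the continuity argument above; the only care needed is to confirm that $D_\phi(\tau)$ is genuinely continuous at the antipodal kink of the angular distance and that the argmax/argmin are attained, both of which hold because the parameter space is the compact circle $S^1$.
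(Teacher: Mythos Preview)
Your proposal is correct and follows essentially the same approach as the paper. For the first two assertions both proofs are identical in spirit: pick $\phi$ as the farthest of the three true points (distance $\ge 2\pi/3$), and pick $\theta=\phi$ for the second. For non-convergence, both arguments rest on the same two facts---that the generator's best response is $\theta^i=\phi^i$ (hence $D_{\phi^i}(\theta^i)=1$) and that the discriminator's optimal objective value is bounded below by roughly $1/3$ at every step---followed by a continuity contradiction; you track the discriminator's objective $f_{i-1}(\phi^i)$ directly, whereas the paper tracks the gap $|D_{\phi^i}(\theta^i)-D_{\phi^{i-1}}(\theta^i)|$, which is a cosmetic difference (your version in fact avoids the implicit use of the symmetry $D_{\phi^{i-1}}(\theta^i)=D_{\phi^i}(\theta^{i-1})$ that the paper's index bookkeeping relies on).
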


\begin{proof}
For any $\theta$, we can just choose $\phi$ to be the farthest point in $\{0,2\pi/3,4\pi/3\}$. Clearly the distance is at least $2\pi/3$ and therefore $D_\phi(\theta) \le \exp(-10) \le 0.001$. On the other hand, for the true distribution, it has $1/3$ probability of generating the point $\phi$, therefore $\E_{\tau\sim \mathcal{D_{true}}}[D_\phi(\tau)] \ge 1/3$. For every $\phi$, we can always choose $\theta = \phi$, and we have $D_\phi(\theta) = 1$.

By the construction of $\phi^i$ and $\theta^i$ in Algorithm~\ref{alg:bestresponse}, we know for any $i$ $D_{\phi^i}(\theta^i) = 1$, but $\E_{\tau\sim \mathcal{D}_{true}}[D_\phi(\tau)] - D_{\phi^{i-1}}(\theta^i)\ge 1/4$. Therefore $|D_{\phi^i}(\theta^i) - D_{\phi^{i-1}}(\theta^i)| \ge 1/4$ for all $i$ and the sequences cannot converge.
\end{proof}

This may not be very surprising as the generator does not even have the capacity to generate the true distribution. One might hope that once we use a large enough neural network, the generator will be able to generate the true distribution. However, our next example shows even in that case the best response algorithm may not converge. 

\begin{definition}[Example 2]Let $\theta, \phi \in [0,2\pi)^3$ will be 3 -dimensional vectors. The generator $G(\theta)$ generates the uniform distribution over points $\theta_1,\theta_2,\theta_3$. The discriminator function is chosen to be $D_\phi(\tau) = \frac{1}{3}\sum_{i=1}^3\exp(-10d(\tau,\phi_i)^2)$.
\end{definition}

Clearly in this example, the true distribution can be generated by the generator (just by choosing $\theta = (0,2\pi/3,4\pi/3)$). However we will show that the best response algorithm still cannot always converge.

\begin{theorem} Suppose the generator and discriminator are described as in Example 2, and $\theta^0 = (0,0,0)$, then we have: (1) In every iteration the three points for generator $\theta^i_{1,2,3}$ are equal to each other. (2) In every iteration $\theta^i_1$ is $0.1$-close to one of the true points $\{0,2\pi/3,4\pi/3\}$, and its closest point is different from the previous iteration.
\end{theorem}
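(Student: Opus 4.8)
The plan is to reduce the generator's update to a one-dimensional problem and then run a single induction on the iteration index. For claim (1), observe that the generator's best response $\min_\theta -\E_{\tau\sim G(\theta)}[D_{\phi^i}(\tau)]=\max_\theta \tfrac13\sum_{k=1}^3 D_{\phi^i}(\theta_k)$ is separable and symmetric across the three coordinates: each $\theta^i_k$ is a maximizer of the single scalar function $\theta\mapsto D_{\phi^i}(\theta)$. Since the three coordinates solve the identical optimization problem, $\arg\max$ returns the same value for each, so $\theta^i_1=\theta^i_2=\theta^i_3$; with the hypothesis $\theta^0=(0,0,0)$ this gives claim (1). Writing $a^i:=\theta^i_1$, the generator is then a point mass at angle $a^i$, and I would prove by induction that $a^i$ lies within $0.1$ of some true point $t_i\in P:=\{0,2\pi/3,4\pi/3\}$ with $t_i\neq t_{i-1}$ (base case $a^0=0$).

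For the discriminator step, suppose the generator is the point mass at $a^{i-1}$, within $0.1$ of $t_{i-1}$ (say $t_{i-1}=0$). The discriminator objective also separates as $\tfrac13\sum_j \tilde g(\phi_j)$, where
\[
\tilde g(s)=\tfrac13\sum_{t\in P}\exp(-10\,d(t,s)^2)-\exp(-10\,d(a^{i-1},s)^2).
\]
The governing fact is that the bumps are sharply separated: the Gaussian width $1/\sqrt{10}\approx 0.32$ is tiny next to the spacing $2\pi/3\approx 2.09$, and $\exp(-10(2\pi/3)^2)$ is negligible. From this I would establish three estimates: (i) near each of $2\pi/3$ and $4\pi/3$ one has $\tilde g\approx 1/3$, which is the global maximum; (ii) for every $s$ within distance $1$ of $t_{i-1}$ one has $\tilde g(s)<1/3-\Omega(1)$, because the subtracted term $\exp(-10\,d(a^{i-1},s)^2)$ cannot be made small without pushing $s$ away from $a^{i-1}\approx 0$ and thereby killing the positive $0$-bump; and (iii) away from $P$ the value is $\approx 0$. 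Hence each $\phi^i_j$ lies within a negligible distance of one of the two true points $\neq t_{i-1}$.

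To close the induction and prove claim (2), note $D_{\phi^i}$ is then a sum of three bumps placed near $2\pi/3$ or $4\pi/3$, so $D_{\phi^i}(a^{i-1})\approx 0$ while its value at the covered points is at least $1/3$. Since three bumps cannot split evenly between two locations, one of $2\pi/3,4\pi/3$ carries strictly more mass, so $\arg\max_\theta D_{\phi^i}(\theta)$ is unique and lies within $0.1$ of that dominant true point $t_i\neq t_{i-1}$. This re-establishes the inductive hypothesis and yields claim (2); and since consecutive nearest true points differ while distinct points of $P$ are $2\pi/3$ apart, $a^i$ jumps by at least $2\pi/3-0.2$ each step and cannot converge.

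I expect estimate (ii) to be the main obstacle: showing that the discriminator \emph{strictly} prefers the two uncovered true points even though $a^{i-1}$ only approximately coincides with $t_{i-1}$ is exactly what forces the generator to relocate, and it requires quantitatively tracking how the negative bump at $a^{i-1}$ overwhelms the positive bump at $t_{i-1}$ throughout a whole neighborhood. The remaining bounds are routine Gaussian-tail estimates, but one must also check that the $0.1$ slack is reproduced exactly by the induction rather than growing from step to step.
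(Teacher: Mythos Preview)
Your proposal is correct and follows essentially the same approach as the paper: separate both the generator's and the discriminator's objectives coordinatewise, induct on the iteration, show (via what the paper calls a Claim) that each $\phi^i_j$ lands within a small radius of one of the two true points not covered by $a^{i-1}$, use pigeonhole to get a strict majority at one of those two points, and conclude that $D_{\phi^i}$ has a unique maximizer within $0.1$ of that majority point. The paper dispatches your ``main obstacle'' (estimate (ii)) with the explicit numerical bound $\tilde g(s)\le 1/3 - 10\cdot 0.05^2 + \exp(-10)$ versus $\tilde g(2\pi/3)\ge 1/3-\exp(-10)$, which is exactly the quantitative comparison you anticipated needing.
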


Before giving detailed calculations, we first give an intuitive argument. In this example, we will use induction to prove that at every iteration $t$, two properties are preserved: 1. The three points of the generator ($\theta^t_1,\theta^t_2,\theta^t_3$) are close to the same real example ($0,2\pi/3$ or $4\pi/3$); 2. The three points of the discriminator ($\phi^{t+1}_1,\phi^{t+1}_2,\phi^{t+1}_3$) will be close to the other two real examples. To go from 1 to 2, notice that in this case the three $\phi$ values can be optimized independently (and the final objective is the sum of the three), so it suffices to argue for one of them. For one $\phi$, by our construction the objective function is really close to the sum of two Gaussians at the other two real examples, minus twice of a Gaussian at the real example that $\theta^t_i$'s are close to (see Figure~\ref{fig:opt}). From the Figure it is clear that the maximum of this function is close to one of the real examples that is different from $\theta^t_i$.Now all the three $\phi^{t+1}_i$'s will be close to one of the two real examples, so one of them is going to get at least two $\phi^{t+1}_i$'s. In the next iteration, as the generator is trying to maximize the output of discriminator, all three $\theta^{t+1}_i$'s will be close to the real example with at least two $\phi^{t+1}_i$'s.

\begin{figure}
\centering
\includegraphics[width=3in]{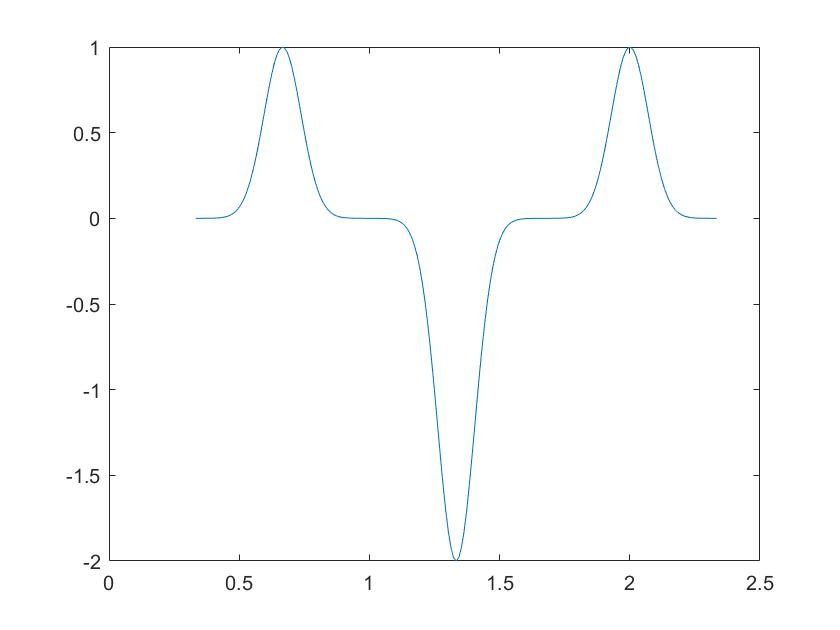}
\caption{Optimization problem for $\phi$\label{fig:opt}}
\end{figure}

Now we make this intuition formal through calculations.

\begin{proof}
The optimization problems here are fairly simple and we can argue about the solutions directly. Throughout the proof we will use the fact that $\exp(-10*(2\pi/3)^2) < 1e-4$ and $\exp(1+\epsilon) \approx 1+\epsilon$. The following claim describes the properties we need:

\begin{claim}
If $\theta_1=\theta_2=\theta_3$ and $\theta_1$ is $0.1$-close to one of $\{0,2\pi/3,4\pi/3\}$, then the optimal $\phi$ must satisfy $\phi_1,\phi_2,\phi_3$ be $0.05$-close to the other two points in $\{0,2\pi/3,4\pi/3\}$.
\end{claim}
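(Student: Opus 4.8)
The plan is to exploit the additive structure of the discriminator's objective. Since $\theta_1=\theta_2=\theta_3=:\theta^\star$ and $D_\phi(\tau)=\frac13\sum_{i=1}^3\exp(-10\,d(\tau,\phi_i)^2)$, the quantity the discriminator maximizes, $\E_{\tau\sim\mathcal{D}_{true}}[D_\phi(\tau)]-\E_{\tau\sim G(\theta)}[D_\phi(\tau)]$, decouples into a sum of three identical terms, one per coordinate $\phi_i$. Hence each $\phi_i$ is chosen independently to maximize the single-variable function
\[
g(\psi)=\frac19\sum_{k\in\{0,2\pi/3,4\pi/3\}}\exp(-10\,d(k,\psi)^2)-\frac13\exp(-10\,d(\theta^\star,\psi)^2),
\]
and it suffices to show that every maximizer of $g$ lies within $0.05$ of one of the two real points not close to $\theta^\star$. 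By the rotational symmetry of $\{0,2\pi/3,4\pi/3\}$ I may assume $\theta^\star$ is $0.1$-close to $0$, so the target is to locate the maximizer near $2\pi/3$ or $4\pi/3$.

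First I would record the size of the ``cross terms'': any bump $\exp(-10\,d(\cdot,\cdot)^2)$ evaluated at two features separated by roughly $2\pi/3$ is at most $\exp(-10(2\pi/3-0.15)^2)\ll 10^{-4}$, far below the gap produced below. Using this, I lower bound the maximum by evaluating $g$ at $2\pi/3$: the positive bump centered there contributes $1/9$, while the two remaining positive bumps and the negative bump are all cross terms, giving $g(2\pi/3)\ge 1/9-10^{-4}$.

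The core is an upper bound on $g$ away from $2\pi/3$ and $4\pi/3$, obtained by partitioning the circle into three arcs of length $2\pi/3$ centered at the real points. On the arc around $0$ I split into subcases: if $\psi$ is at angular distance $a\ge 0.05$ from $0$, the positive bump at $0$ is at most $\frac19\exp(-10\cdot0.05^2)\approx 0.1084$, below $1/9\approx 0.1111$ by about $0.0027$, and the remaining terms only decrease $g$; if $a<0.05$, then $d(\theta^\star,\psi)\le a+0.1<0.15$, so the negative term is at least $\frac13\exp(-10\cdot0.15^2)\approx 0.27$, forcing $g(\psi)<0$. Either way $g<1/9-\delta_0$ on this arc with, say, $\delta_0=0.002$. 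On the arcs around $2\pi/3$ and $4\pi/3$ the negative bump is a cross term and hence negligible, so $g(\psi)\le\frac19\exp(-10\,b^2)+10^{-4}$ where $b$ is the distance to that arc's center; for $b\ge 0.05$ this is again at most $0.1084+10^{-4}<1/9-\delta_0$. Combined with $g(2\pi/3)>1/9-\delta_0$, this forces every global maximizer to be $0.05$-close to $2\pi/3$ or $4\pi/3$, which is the claim.

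The main obstacle is purely quantitative: the value separation between the true maximizer and a point $0.05$ away is only about $0.0027$, so I must verify that all neglected contributions — the tails of the two distant positive bumps, and on the outer arcs the negative bump — are genuinely smaller than this margin. This is exactly where the estimate $\exp(-10(2\pi/3)^2)\approx 10^{-19}\ll 0.0027$ carries the argument. The only subregion needing a different mechanism is $\psi$ near $0$, where smallness of the positive bump is unavailable and one instead invokes the dominance of the larger (coefficient $1/3$) negative bump to drive $g$ below zero.
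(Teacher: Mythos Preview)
Your approach is essentially the same as the paper's: you exploit the additive separability of the objective to reduce to a single-variable function, then compare the value at $2\pi/3$ (or $4\pi/3$) against an upper bound valid outside the $0.05$-neighborhoods of those two points. The paper's proof is terser---it asserts in one line that away from $2\pi/3,4\pi/3$ the value drops by at least a fixed constant---whereas you make the case analysis explicit, in particular separating out the region near $0$ where the large negative bump (rather than the decay of the positive bump) is what drives $g$ down; this is a point the paper's ``clearly'' glosses over, so your version is if anything more complete.
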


We first prove the Theorem with the claim. We will do this by induction. The induction hypothesis is that for every $j \le t$, we have $\theta^j_{1,2,3}$ are equal to each other, and $\theta^j_1$ is $0.1$-close to one of the true points $\{0,2\pi/3,4\pi/3\}$. This is clearly true for $t = 0$. Now let us assume this is true for $t$ and consider iteration $t+1$.

Without loss of generality we assume $\theta^t_1$ is close to $0$. Now by Claim we know $\phi^t_1,\phi^t_2,\phi^t_3$ are 0.05-close to either $2\pi/3$ or $4\pi/3$. Without loss of generality assume there are more $\phi^t_i$'s close to $2\pi/3$ (the number of $\phi^t_i$'s close to the two point has to be different because there are 3 $\phi^t_i$'s). Now, by property of Gaussians, we know $D_{\phi^t}(\tau)$ has a unique maximum that is within $0.1$ of $2\pi/3$ (the influence from the other point is much smaller than $0.05$). Since the generator is trying to maximize $\E_{\tau\sim  G(\theta)}[D_\phi(\tau)]$, all three $\theta^t_i (i=1,2,3)$ should be at this maximizer. This finishes the induction. 
\end{proof}

Now we prove the claim.

\begin{proof}
The objective function is
$$
\max_\phi \frac{1}{3}\left(\E_{\tau\sim \mathcal{D}_{true}}[\sum_{i=1}^3\exp(-10d(\tau,\phi_i)^2)] - \E_{\tau\sim  G(\theta)}[\sum_{i=1}^3\exp(-10d(\tau,\phi_i)^2)]\right).
$$ 
In this objective function $\phi_i$'s do not interact with each other, so we can break the objective function into three (one for each $\phi_i$). Without loss of generality, assume $\theta_{1,2,3}$ are $0.1$-close to 0, we are trying to maximize 
$$
\max_\phi \left(\frac{1}{3}\sum_{i=1}^3\exp(-10d(\phi,i \cdot 2\pi/3)^2)] - \exp(-10d(\theta,\phi)^2)\right).
$$ 

Clearly, if $\phi$ is not $0.05$ close to either $2\pi/3$ or $4\pi/3$, we have $D \le 1/3 - 10\cdot 0.05^2 + \exp(-10) \le 1/3 - 0.02$. On the other hand, when $\phi = 2\pi/3$ or $4\pi/3$, we have $D \ge 1/3 - \exp(-10)$. Therefore the maximum must be $0.05$-close to one of the two points.
\end{proof}

Note that this example is very similar to the {\em mode collapse} problem mentioned in NIPS 2016 GAN tutorial\citep{goodfellow2016nips}.

\end{document}